\documentclass{article} 
\usepackage{style/iclr2021_conference,times}


\usepackage{hyperref}
\usepackage{url}

\usepackage{tikz}
\usepackage{mathtools}
\usepackage{amssymb}
\usepackage{amsmath}
\usepackage{xspace}



\newcommand{\tuple}[1]{\langle{#1}\rangle}

\DeclareMathOperator*{\argmax}{arg\,max}


\usepackage{tikz}
\usetikzlibrary{shapes.geometric}
\usetikzlibrary{positioning,automata,arrows}
\usepackage{pifont}

\usepackage{fontawesome,wasysym,marvosym}

\definecolor{UofTBlue}{RGB}{0,47,101}

\newcommand{\cookie}[0]{\begin{tikzpicture}
    \draw[draw=black,step=1cm,fill=brown] (0.5,0.5) circle (0.25);
    \draw[step=1cm,fill=black] (0.65,0.5) circle (0.03);
    \draw[step=1cm,fill=black] (0.45,0.65) circle (0.03);
    \draw[step=1cm,fill=black] (0.5,0.4) circle (0.03);
\end{tikzpicture}}

\newcommand{\cookiebutton}[0]{\begin{tikzpicture}
    \draw[draw=black,step=1cm,line width=0.5mm,fill=teal!50!white] (0.5,0.5) circle (0.25);
\end{tikzpicture}}


\usepackage{pgfplots,pgfplotstable}
\pgfplotsset{compat=1.10}

\usepgfplotslibrary{fillbetween}

\newcommand{\entry}[2]
{
    \raisebox{3pt}{\tikz{\draw[#1,line width=1.7pt] (0,0) -- (0.6,0);}} #2
}

\newcommand{\entrysmall}[2]
{
    \raisebox{3pt}{\tikz{\draw[#1,line width=1.7pt] (0,0) -- (0.4,0);}} #2
}

\newcommand{\entrydashed}[2]
{
    \raisebox{3pt}{\tikz{\draw[#1,dotted,line width=1.7pt] (0,0) -- (0.6,0);}} #2
}

\newcommand{\entrydashedsmall}[2]
{
    \raisebox{3pt}{\tikz{\draw[#1,dotted,line width=1.7pt] (0,0) -- (0.4,0);}} #2
}


\usepackage{pgffor}
\usepackage{amssymb}
\usepackage{amsthm}
\usepackage{bbm}
\usepackage{algorithmic,algorithm}
\usepackage{subcaption}
\newtheorem{theorem}{Theorem}[section]
\newtheorem{proposition}{Proposition}[section]
\newtheorem{example}{Example}[section]

\newtheorem{lemma}[theorem]{Lemma}

\theoremstyle{definition}
\newtheorem{definition}{Definition}[section]

\title{The act of remembering: a study in partially observable reinforcement learning}

\iclrfinalcopy

\author{Rodrigo Toro Icarte\\
University of Toronto \& Vector Institute\\
\texttt{rntoro@cs.toronto.edu} \\
\And
Richard Valenzano \\
Element AI \\
\texttt{rick.valenzano@elementai.com}\phantom{aaal} \\
\AND
Toryn Q. Klassen \\
University of Toronto \& Vector Institute\\
\texttt{toryn@cs.toronto.edu} \\
\And
Phillip Christoffersen\\
University of Toronto \& Vector Institute\phantom{aaaaa}\\
\texttt{phill@cs.toronto.edu}\\
\AND
Amir-massoud Farahmand \\
University of Toronto \& Vector Institute\\
\texttt{farahmand@vectorinstitute.ai} \\
\And
Sheila A. McIlraith \\
University of Toronto \& Vector Institute\\
\texttt{sheila@cs.toronto.edu}
}

%

\begin{document}

\maketitle

\begin{abstract}
Reinforcement Learning (RL) agents typically learn memoryless policies---policies that only consider the last observation when selecting actions. Learning memoryless policies is efficient and optimal in fully observable environments. However, some form of memory is necessary when RL agents are faced with partial observability. In this paper, we study a lightweight approach to tackle partial observability in RL. We provide the agent with an external memory and additional actions to control what, if anything, is written to the memory. At every step, the current memory state is part of the agent’s observation, and the agent selects a tuple of actions: one action that modifies the environment and another that modifies the memory. When the external memory is sufficiently expressive, optimal memoryless policies yield globally optimal solutions. Unfortunately, previous attempts to use external memory in the form of binary memory have produced poor results in practice. Here, we investigate alternative forms of memory in support of learning effective memoryless policies. Our novel forms of memory outperform binary and LSTM-based memory in well-established partially observable domains.\footnote{Our source code is available at \url{https://github.com/RodrigoToroIcarte/memory4rl}} 
\end{abstract}

\section{Introduction}

Reinforcement Learning (RL) agents learn policies (i.e., mappings from observations to actions) by interacting with an environment. RL agents usually learn \emph{memoryless} policies, which are policies that only consider the last observation when selecting the next action. In fully observable environments, learning memoryless policies proves to be efficient and optimal. However, when faced with partially observable environments, RL agents require some form of memory in order to learn optimal behaviours. This is usually accomplished using k-order memories \citep{mnih2015human}, recurrent networks \citep{hausknecht2015deep}, or memory-augmented neural networks \citep{oh2016control}.

In this paper, we study a lightweight alternative approach to tackle partially observability in RL. The approach consists of providing the agent with an external memory and extra actions to control it (as shown in Figure~\ref{fig:summary}). The resulting RL problem is still partially observable, but if the external memory is sufficiently expressive, then optimal memoryless policies will also yield globally optimal solutions. Previous works that explored this idea using external \emph{binary} or \emph{continuous} memories produced poor results with standard RL methods~\citep{peshkin1999learning,zhang2016learning}. Our work shows that the main issue is with the type of memory they were using, and that RL agents are capable of learning effective strategies to utilize external memories when structured appropriately.

This paper proposes a general framework for studying agents that learn to control external memory in service of partially observable RL. In addition to binary memory, we propose two novel forms of external memory, called O$k$ and OA$k$. We study the theory behind learning memoryless policies that jointly decide what to do and what to remember. Finally, we present empirical results that show that O$k$ and OA$k$ memories are usually more sample efficient than LSTM memories, can solve problems that were unsolvable using LSTMs, and are faster to train. These results suggest interesting avenues for future work in the theory and practice of RL in partially observable environments.

\begin{figure}
    \centering
    {

\tikzstyle{sagent} = [rectangle, rounded corners, minimum width=2cm, minimum height=.5cm, text centered, draw=black, fill=red!30]
\tikzstyle{senv} = [rectangle, rounded corners, minimum width=2cm, minimum height=.6cm, text centered, draw=black, fill=green!30]
\tikzstyle{smem} = [rectangle, rounded corners, minimum width=1.9cm, minimum height=.5cm, text centered, draw=black, fill=blue!30]

\tikzstyle{arrow} = [thick,->,>=stealth]

\begin{tikzpicture}[node distance=1cm]

    \draw[black,rounded corners,thick,fill=orange!30] (1.5,-1) rectangle (9.2,1.05);
    
    \node at (-2,0) (agent) [sagent] {RL agent};
    \node at (4.1,0) (environment) [senv] {Environment};
    \node at (8.1,0)  (memory) [smem] {Memory};

    \node at (5.5,0.75) {Memory-Augmented Environment};

    \node at (0.2,.25) {$\bar{a}=\tuple{a,w}$};
    \draw [arrow] (agent) -- (1.4,0);
    
    \node at (2.5,-.25) {$a$};
    \draw [arrow] (1.6,0) -- (2.9,0);
    
    \node at (6.05,-.25) {$(o,a,r,o')$};
    \draw [arrow] (environment) -- (7,0);

    \node at (2.5,0.35) {$w$};
    \draw [arrow] (2.2,0) -- (2.2,.5) -- (8.1,.5) -- (memory);

    \node at (-1.3,-.45) {$r$};
    \draw [arrow] (environment) -- (4.1,-.6) -- (-1.7,-.6) -- (-1.7,-.3);

    \node at (4.7,-.5) {$o'$};
    \draw [thick] (4.5,-.25) -- (4.5,-.8);
    
    \node at (8.4,-.5) {$m'$};
    \node at (-3.4,-.55) {$\bar{o}=\tuple{o',m'}$};
    \draw [arrow] (memory) -- (8.1,-.8) -- (-2.3,-.8) -- (-2.3,-.3);



\end{tikzpicture}}%
    \caption{A diagram of a Memory-Augmented Environment.}
    \label{fig:summary}
\end{figure}
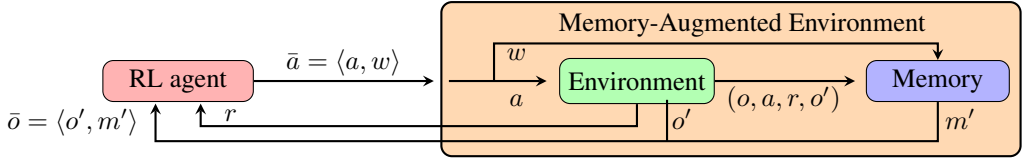

\section{Preliminaries}

RL agents learn how to act by interacting with an environment. Often these environments are modelled as a  \emph{Markov Decision Process (MDP)}. An MDP is a tuple $\mathcal{M} = \tuple{S,A,R,p,\gamma,\mu}$, where $S$ is a finite set of \emph{states}, $A$ is a finite set of \emph{actions}, $R$ is the finite set of possible rewards, $p(s',r|s,a)$ defines the \emph{dynamics} of the MDP, $\gamma$ is the \emph{discount factor}, and $\mu$ is the \emph{initial state distribution}. We focus on the case where an interaction is divided into \emph{episodes}. At the beginning of an episode, the environment is set to an initial state $s_0$, sampled from $\mu$. Then, at time step $t$, the agent observes the current state $s_t \in S$ and executes an action $a_t \in A$. In response, the environment returns the next state $s_{t+1}$ and immediate reward $r_t$ sampled from $p(s_{t+1},r_t|s_t,a_t)$. The process then repeats.

Agents select actions according to a \emph{policy} $\pi(a|s)$---which is a probability distribution from states to actions. The \emph{prediction} task is to estimate how ``good" a policy is, where the policy is evaluated according to the \emph{expected discounted return} in any state.
This can be done by estimating the \emph{action-value function} $q_{\pi}$ of policy $\pi$, where $q_{\pi}(s,a)$ represents the expected discounted return when executing action $a$ in state $s$ and following $\pi$ thereafter. $q_{\pi}$ is usually estimated using \emph{Monte Carlo} samples \citep{barto1994monte} or \emph{TD} methods \citep{sutton1988learning}. The \emph{control} task involves finding the \emph{optimal} policy $\pi^*$. This is the policy that maximizes the expected discounted return in every state. To do so, most RL methods rely on the \emph{policy improvement} theorem, which we discuss in Section \ref{sec:policy_learning}. 

We use a \emph{Partially Observable Markov Decision Process (POMDP)} formulation to model partial observability. A POMDP is a tuple $\mathcal{P} = \tuple{S,O,A,R,p,\omega,\gamma,\mu}$, where $S$, $A$, $R$, $p$, $\gamma$, and $\mu$ are defined as in an MDP, $O$ is a finite set of \emph{observations}, and $\omega(o|s)$ is the \emph{observation probability distribution}. Interacting with a POMDP is similar to an MDP. The environment starts from a sampled initial state $s_0 \sim \mu$. At time step $t$, the agent is in state $s_t \in S$, executes an action $a_t \in A$, receives an immediate reward $r_t$, and moves to $s_{t+1}$ according to $p(s_{t+1},r_t|s_t,a_t)$. However, the agent does not observe $s_t$ directly. Instead, the agent observes $o_t \in O$, which is linked to $s_t$ via $\omega(o_t|s_t)$. 

\section{Related Work}

Early attempts to perform RL in partially observable domains focused on learning memoryless policies. \citet{jaakkola1995reinforcement} identified an RL algorithm that was guaranteed to converge to locally optimal memoryless policies, and similar guarantees have been given in the POMDP literature \citep{li2011finding}. Unfortunately, \citet{singh1994learning} showed that an optimal memoryless policy $\pi^*(a_t|o_t)$ can be arbitrarily worse than the optimal history-based policy $\pi^*(a_t|o_0, a_0, \ldots,o_t)$ for POMDPs.

Different approaches have been proposed to learn history-based policies $\pi^*(a_t|o_0, \ldots,o_t)$ using some form of state-approximation technique. For example, model-based RL methods learn a state representation of histories that enables Markovian prediction of the next observation and immediate reward, and then learns policies over that representation \citep{littman2002predictive,poupart2008model,doshi2013bayesian,ghavamzadeh2015bayesian,zhang2019learning,toroicarteWKVCM2019learning}. The focus of our work is on model-free methods, which are the state of the art for solving partially observable problems from low-level inputs (such as images). In model-free RL, history-based policies are approximated using recurrent neural networks \citep{hausknecht2015deep,mnih2016asynchronous,wang2016sample,schulman2017proximal,jaderberg2016reinforcement}, or some form of memory-augmented neural network \citep{oh2016control,khan2017memory,hung2018optimizing}. They are usually trained using policy gradient methods. These approaches are computationally expensive because they require the backpropagation of gradients through the history of observations and actions for learning history-based policies. In comparison, our approach is much more lightweight -- being faster to train than LSTMs and generally having better sample complexity.

On the other hand, it is possible to learn memoryless policies that optimally solve POMDPs. The trick is to give the agent a (large enough) memory and extra actions to write to it. From the agent's perspective, it learns a standard memoryless policy from observations to actions, but the observations now include the state of the memory, and the actions include options for how to alter the memory. The main purpose of our work is to resurrect this simple idea by understanding why previous work were unable to make it work. We also proposed a unified framework to study agents with external memories and two novel memories that outperform existing forms of external memories.

Concretely, the idea of providing some form of external memory to an agent and actions to modify it goes back to \citet{littman1993optimization}, who discussed a hypothetical agent that could learn to control an external binary memory in support of solving partially observable tasks. \citet{peshkin1999learning} reported the first empirical results using tabular RL to learn memoryless policies over such binary memories. While the results were promising in some small environments that required only one bit of external memory to be solved, they did not scale to more complex domains. After \citet{peshkin1999learning}, there was not much work trying to push this line of research forward. We believe that the reason is that RL agents cannot reliably learn to control binary memories (as shown in our results). That said, there is one recent work that has further explored the idea of modifying external memories using actions. \citet{zhang2016learning} proposed to use continuous memories, where each element in the array was a floating point number, instead of binary memories. However, they learned the memoryless policies using imitation learning and pointed out that standard RL methods did not work because the reward signal was insufficient supervision for the agent to understand how to appropriately control the memory. One contribution of our work is to advance our understanding of methods that provide external memory to standard RL agents, and to show that they can work well in practice.

\section{Agents with External Memory}

In this section, we formally define what it means to provide external memory to an agent, and describe several forms of external memory. We will use the following problem to aid explanation:

\begin{example}[the gravity domain \citep{toroicarteWKVCM2019learning}]
The \emph{gravity domain}, shown in Figure~\ref{fig:gravity}, consists of an agent (purple triangle), a cookie, and a button. The agent can move in the four cardinal directions and receives a reward of 1 when it eats the cookie. Doing so ends the episode. There is an external force pulling the agent down---i.e., the outcome of the ``\texttt{move-up}'' action is a downward movement with probability 0.9---which can be turned off (or back on) by pressing the button. Every episode begins with the agent in the bottom left corner and the external force on. 
\end{example}
\vspace{-4pt}

The optimal policy for this problem is to first press the button and then to go to the cookie. Since the agent cannot observe the force, optimal behaviour requires memory of the state of the button, meaning that no memoryless policies can solve this problem optimally. However, suppose that the agent was given a single bit that they could write to on every step using the special actions \texttt{write-1} and \texttt{write-0}. This memory can then be used to record the state of the button, and so an optimal memoryless policy for this augmented problem will optimally solve the gravity domain.

Figure \ref{fig:summary} shows a generalization of this idea. From the agent's perspective, they are, as usual, performing actions in an environment and receiving observations and rewards in return. However, they are now interacting with a \emph{memory-augmented environment}---which consists of a \emph{sub-environment} (i.e., the original POMDP environment) and a \emph{memory}. The memory receives an action $w$ (selected by the agent) and local information coming from the sub-environment $(o,a,r,o')$ to update its internal state to $m'$. We formalize these external memory modules as follows:

\begin{definition}[external memories]
Let $\mathcal{P} = \tuple{S,O,A,R,p,\omega,\gamma,\mu}$ be a POMDP. An external memory for $\mathcal{P}$ is a tuple $\mathcal{M}_{\mathcal{P}}=\tuple{M,W,\Gamma,\eta}$, where $M$ is a finite set of \emph{memory-states}, $W$ is a finite set of \emph{memory-writing actions}, $\Gamma(m'|m,w,o,a,r,o')$ is the \emph{memory-writing distribution}, and $\eta$ is the \emph{initial memory-state distribution}.
\end{definition}
\vspace{-4pt}

An external memory module defines the set of possible memory configurations ($M$) and how the agent can manipulate that memory ($W$ and $\Gamma$). In the one-bit example for the gravity domain, $M$ consists of the two possible values of the bit ($0$ or $1$), $W$ consists of the two possible write options of the bit ($0$ or $1$), and the memory-writing distribution $\Gamma$ updates the bit of memory to $0$ or $1$ depending on which action was selected. We now define a memory-augmented environment as follows:

\begin{definition}[memory-augmented environments]\label{def:mae}
A memory-augmented environment is a tuple $\mathcal{E}=\tuple{\mathcal{P},\mathcal{M}_{\mathcal{P}}}$ where $\mathcal{P}$ is a POMDP and $\mathcal{M}_{\mathcal{P}}$ is an external memory for $\mathcal{P}$.
\end{definition}
\vspace{-4pt}

The interaction between an agent and a memory-augmented environment $\mathcal{E}=\tuple{\mathcal{P},\mathcal{M}_{\mathcal{P}}}$ is the same as with the original environment, just with an augmented observation and action space. At the beginning of each episode, an initial state $s_0$, observation $o_0$, and memory state $m_0$, are sampled according to $s_0 \sim \mu$, $o_0 \sim \omega(o_0|s_0)$, and $m_0 \sim \eta$, respectively. At time step $t$, the agent observes $\bar{o}_t = \tuple{o_t,m_t}$ and executes an action $\bar{a}_t = \tuple{a_t,w_t} \in A \times W$ in $\mathcal{E}$. Consequently, the sub-environment samples an immediate reward $r_t$ and the next state $s_{t+1}$ according to $p(s_{t+1},r_t|s_t,a_t)$. The sub-environment also samples the next observation $o_{t+1} \sim \omega(o_{t+1}|s_{t+1})$. The memory state is then updated to $m_{t+1}$ according to $\Gamma(m_{t+1}|m_t,w_t,o_t,a_t,r_t,o_{t+1})$. Finally, the agent receives the immediate reward $r_t$ and the next observation $\bar{o}_{t+1} = \tuple{o_{t+1},m_{t+1}}$, and the process repeats.

Any standard RL algorithm can be used to find a memoryless policy for a given memory-augmented environment $\mathcal{E}=\tuple{\mathcal{P},\mathcal{M}_{\mathcal{P}}}$. 
We note that the optimal memoryless policy for $\mathcal{E}$ must be at least as good as the optimal memoryless policy for the original POMDP $\mathcal{P}$. This is because $\mathcal{E}$ and $\mathcal{P}$ share a reward function, and the agent can always choose to ignore the memory. 
That said, if the external memory module $\mathcal{M}_{\mathcal{P}}$ is ``expressive enough," then optimal memoryless policies for $\tuple{\mathcal{P},\mathcal{M}_{\mathcal{P}}}$ will be just as good as the optimal policy for $\mathcal{P}$. This is shown formally in Appendix \ref{app:expressive}.

\begin{figure}
    \centering
    \begin{minipage}[t]{0.22\textwidth}
        \centering
        gravity domain
        
        \vspace{1mm}
        \resizebox{0.9\textwidth}{!}{\begin{tikzpicture}

    \draw[step=1cm,draw=black,line width=1mm] (0,0) rectangle (5,5);


    \draw [->, gray, line width=.5mm] (0.5,3.8) -- (0.5,3.2);
    \draw [->, gray, line width=.5mm] (1.5,3.8) -- (1.5,3.2);
    \draw [->, gray, line width=.5mm] (2.5,3.8) -- (2.5,3.2);
    \draw [->, gray, line width=.5mm] (3.5,3.8) -- (3.5,3.2);
    \draw [->, gray, line width=.5mm] (4.5,3.8) -- (4.5,3.2);
    
    \draw [->, gray, line width=.5mm] (1.5,4.8) -- (1.5,4.2);
    \draw [->, gray, line width=.5mm] (2.5,4.8) -- (2.5,4.2);
    \draw [->, gray, line width=.5mm] (3.5,4.8) -- (3.5,4.2);
    \draw [->, gray, line width=.5mm] (4.5,4.8) -- (4.5,4.2);
    
    \draw [->, gray, line width=.5mm] (0.5,1.8) -- (0.5,1.2);
    \draw [->, gray, line width=.5mm] (1.5,1.8) -- (1.5,1.2);
    \draw [->, gray, line width=.5mm] (2.5,1.8) -- (2.5,1.2);
    \draw [->, gray, line width=.5mm] (3.5,1.8) -- (3.5,1.2);
    \draw [->, gray, line width=.5mm] (4.5,1.8) -- (4.5,1.2);
    
    \draw [->, gray, line width=.5mm] (0.5,2.8) -- (0.5,2.2);
    \draw [->, gray, line width=.5mm] (1.5,2.8) -- (1.5,2.2);
    \draw [->, gray, line width=.5mm] (2.5,2.8) -- (2.5,2.2);
    \draw [->, gray, line width=.5mm] (3.5,2.8) -- (3.5,2.2);
    \draw [->, gray, line width=.5mm] (4.5,2.8) -- (4.5,2.2);
    
    \draw [->, gray, line width=.5mm] (1.5,0.8) -- (1.5,0.2);
    \draw [->, gray, line width=.5mm] (2.5,0.8) -- (2.5,0.2);
    \draw [->, gray, line width=.5mm] (3.5,0.8) -- (3.5,0.2);

    \draw[black, line width=1mm] (1,1) -- (5,1);
    
    \node[draw, thick, shape border rotate=90, isosceles triangle, isosceles triangle apex angle=60, fill=violet!70!white, node distance=1cm,minimum height=1.5em] at (0.5,0.4) {};

    \node at (4.5,0.5) {\cookiebutton};
    \node at (0.5,4.5) {\cookie};

\end{tikzpicture}
    \end{minipage}%
    \begin{minipage}[t]{0.31\textwidth}
        \centering
        q-learning
        
        \vspace{1mm}
        \includegraphics[]{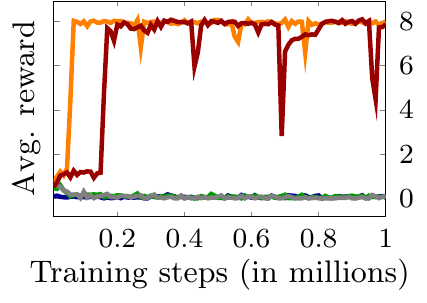}
    \end{minipage}%
    \begin{minipage}[t]{0.31\textwidth}
        \centering
        5-step actor-critic
        
        \vspace{1mm}
        \includegraphics[]{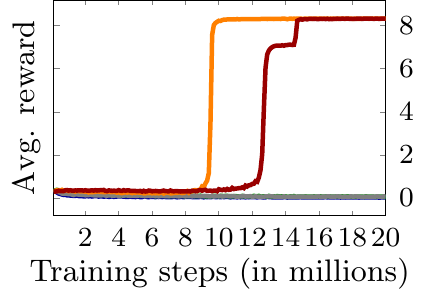}
    \end{minipage}%
    \begin{minipage}[t]{0.16\textwidth}
        \centering
        \phantom{bla}
        
        {\small\begin{tikzpicture}
\node {%
	\begin{tabular}{l}
    \textbf{Memories}:\\
	\entry{gray}{K1} \\
	\entry{green!60!black}{B1}\\
	\entry{orange}{O1} \\
	\entry{red!60!black}{OA1}\\
	\entry{blue!60!black}{None}
	\end{tabular}};
\end{tikzpicture}}%
    \end{minipage}
    \vspace{-2mm}
    
    \caption{Experiments in the gravity domain. We reported the avg. reward per 100 steps.}
    \label{fig:gravity}
\end{figure} 

\subsection{External Memory Modules}

Let us now consider several examples of external memory modules.
We begin by showing how binary memories \citep{littman1993optimization,littman1994memoryless,peshkin1999learning} can be expressed using this formalism. We use the notation B$k$ to refer to a binary memory of $k$ bits:

\begin{definition}[B$k$ memories]
Let $\mathcal{P} = \tuple{S,O,A,R,p,\omega,\gamma,\mu}$ be a POMDP. A \emph{B$k$ memory} for $\mathcal{P}$ is a $k$-bit external memory $\mathcal{M}_{\mathcal{P}}=\tuple{M,W,\Gamma,\eta}$, where $M = \{0,1\}^k$, $W = \{0,1\}^k$, $\eta(0^k) = 1$ (zero otherwise), and $\Gamma(m'|m,w,o,a,r,o') = 1$ if and only if $m'=w$ (zero otherwise).
\end{definition}
\vspace{-4pt}

B$k$ memories are especially attractive given how flexible and expressive they are. Unfortunately, learning to control B$k$ memories is difficult. Figure~\ref{fig:gravity} shows the performance of tabular q-learning \citep{watkins1992q} and 5-step actor-critic \citep{grondman2012survey} in the gravity domain using different types of external memories. In the figure, \emph{None} represents not using any external memory, and K1, O1, and OA1 are explained below. Notice that neither q-learning nor 5-step actor-critic were able to understand how to use the B1 memory to consistently solve the gravity domain. 

There are two main problems with B$k$ memories. First, the action space grows exponentially with $k$. Second, B$k$ memories can be \emph{too} flexible in that the agent can modify the memory arbitrarily and irrespective of what has actually happened, and thereby completely alter what the agent believes about its current situation. For example, recall that in the gravity domain, the agent should use the memory to record whether gravity is on (0) or off (1). However, if the agent incorrectly decides to record that the gravity is off prematurely (i.e., before touching the button), it will believe it has transitioned from a state with low expected reward (where it first has to go to the button) to a state with high expected reward (where the agent wrongly believes that it can go directly to the cookie without any opposition from gravity). This can lead to an unstable learning process, as shown below. 

The main motivation behind our proposed \emph{O$k$} memories is to alleviate these issues. O$k$ memories are a generalization of \emph{k-order} memories, which are buffers of a fixed size that contain the last k observations. We refer to k-order memories as \emph{K$k$} memories, where the second `$k$' indicates the size of the buffer. We formally describe them as external memories in Appendix~\ref{app:memories}. Note that K1 represents a 2-order memory since actions are taken over $\tuple{o,m}$. The main disadvantage of k-order memories is that they do not allow the agent to remember events that occurred more than k steps in the past. O$k$ memories solve this issue by letting the agent decide whether to push the current observation into the k-order buffer or not. Note that, since the agent can only push into the buffer observations that did occur, O$k$ memories are unable to imagine events that have not yet happened.

\begin{definition}[O$k$ memories]
Let $\mathcal{P} = \tuple{S,O,A,R,p,\omega,\gamma,\mu}$ be a POMDP. An \emph{O$k$ memory} for $\mathcal{P}$ is a memory buffer (of size $k$) $\mathcal{M}_{\mathcal{P}}=\tuple{M,W,\Gamma,\eta}$, where $M = (O \cup \{\emptyset\})^k$, $W = \{\top,\bot\}$, $\eta(\emptyset^k) = 1$ (zero otherwise), and $\Gamma(m'|m,w,o,a,r,o') = 1$ if $w=\bot$ and $m'=m$, or $w=\top$, $m = \tuple{o^1, o^2, \cdots, o^k}$, and $m' = \tuple{o^2, \cdots, o^k, o}$ (zero otherwise).
\end{definition}
\vspace{-4pt}

O$k$ memories have strong empirical performance in the gravity domain (see Figure~\ref{fig:gravity}), outperforming B1 and K1. That said, O$k$ memories are insufficient in domains where the history of actions matters. For such domains, we propose \emph{OA$k$} memories. An OA$k$ memory is similar to an O$k$ memory but when the agent chooses to push to its buffer, the information includes the current observation and the action that is executed in the sub-environment. OA$k$ memories are defined in Appendix~\ref{app:memories}. 

We note that optimal memoryless policies over B$k$, OA$k$, O$k$, and K$k$ will optimally solve the original POMDP for some value of $k$, under some assumptions. This is shown in Appendix \ref{app:memory_suff}.

\section{Learning Policies in Memory-Augmented Environments}
\label{sec:policy_learning}

The objective of this section is to understand the theory behind learning memoryless policies over memory-augmented environments and to provide insights into why O$k$ and OA$k$ memories tend to perform better than B$k$ memories. We begin with the following example.

\begin{example}[a recall task]
The recall task is a partially observable environment with only one possible observation, $o$ (i.e., all states appear the same), and three actions, $a_1$, $a_2$, and $a_3$. The episode ends after performing three actions. If the agent executes actions $a_1$, $a_2$, and $a_3$ (in that order), it gets a reward of 1; otherwise it gets a reward of 0. 
\end{example}  
\vspace{-4pt}

\begin{figure}
    \centering
    \begin{minipage}[t]{0.22\textwidth}
        \centering
        OA1 env.
        
        \vspace{1mm}
        \begin{tikzpicture}[on grid,initial distance=10pt,
  every initial by arrow/.style={thick,->, >=stealth}, initial text={}]
  \node[thick,state,initial,inner sep=1pt,minimum size=0pt] (u_0) at (0,0) {\scriptsize$\emptyset$};
  \node[thick,state,inner sep=1pt,minimum size=0pt] (u_1) at (0,-.8) {\scriptsize$1$};
  \node[thick,state,inner sep=1pt,minimum size=0pt] (u_2) at (-1.1,-2) {\scriptsize$2$};
  \node[thick,state,inner sep=1pt,minimum size=0pt] (u_3) at (1.1,-2) {\scriptsize$3$};

   \path[->, >=stealth,gray] (u_0) edge[bend left=45] node [right]{\scriptsize$3\top$} (u_3);
   \path[->, >=stealth,gray] (u_2) edge node [right]{\scriptsize$1\top$} (u_1);
   \path[->, >=stealth,gray] (u_3) edge node [above]{\scriptsize$2\top$} (u_2);
   \path[->, >=stealth,gray] (u_3) edge[bend right] node [right,pos=0.8]{\scriptsize$1\top$} (u_1);
   \path[->, >=stealth,gray] (u_1) edge node [left]{\scriptsize$3\top$} (u_3);
   \path[->, >=stealth,gray] (u_2) edge [loop below] node [right]{\scriptsize o/w} (u_2);
   \path[->, >=stealth,gray] (u_3) edge [loop below] node [left]{\scriptsize o/w} (u_3);
   \path[->, >=stealth,gray] (u_1) edge [loop left] node [above]{\scriptsize o/w} (u_1);
   \path[->, >=stealth,gray] (u_0) edge [loop right] node [right]{\scriptsize o/w} (u_0);
    
   \path[thick,->, >=stealth,blue!60!black] (u_1) edge[bend right] node [left,pos=0.2]{\scriptsize$2\top$} (u_2);
   \path[thick,->, >=stealth,blue!60!black] (u_0) edge node [right]{\scriptsize$1\top$} (u_1);
   \path[thick,->, >=stealth,blue!60!black] (u_2) edge[bend right] node [below]{\scriptsize$3\top$} (u_3);

   \path[thick,->, >=stealth,gray] (u_0) edge[bend right=45,red!60!black] node [left]{\scriptsize$2\top$} (u_2);

\end{tikzpicture}%
    \end{minipage}%
    \begin{minipage}[t]{0.28\textwidth}
        \centering
        q-learning (OA1)
        
        \vspace{1mm}
        \includegraphics[]{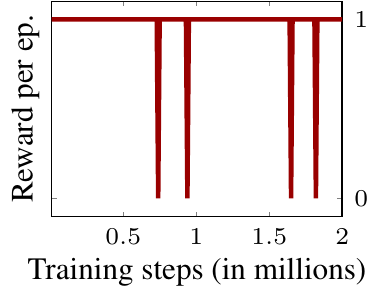}
    \end{minipage}%
    \begin{minipage}[t]{0.28\textwidth}
        \centering
        q-learning (B2)
        
        \vspace{1mm}
        \includegraphics[]{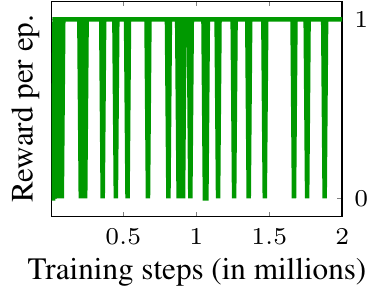}
    \end{minipage}%
    \begin{minipage}[t]{0.22\textwidth}
        \centering
        B2 env.
        
        \vspace{1mm}
        \begin{tikzpicture}[on grid,initial distance=10pt,
  every initial by arrow/.style={thick,->, >=stealth}, initial text={}]
  \node[thick,state,initial,inner sep=1pt,minimum size=0pt] (u_0) at (0,0) {\scriptsize$0$};
  \node[thick,state,inner sep=1pt,minimum size=0pt] (u_1) at (1.5,0) {\scriptsize$1$};
  \node[thick,state,inner sep=1pt,minimum size=0pt] (u_2) at (1.5,-1.5) {\scriptsize$2$};
  \node[thick,state,inner sep=1pt,minimum size=0pt] (u_3) at (0,-1.5) {\scriptsize$3$};

   
   \path[->, >=stealth,gray] (u_3) edge[bend left] node {} (u_0);
   \path[->, >=stealth,gray] (u_0) edge node {} (u_3);
   \path[->, >=stealth,gray] (u_3) edge node {} (u_2);
   \path[->, >=stealth,gray] (u_2) edge node {} (u_1);
   \path[->, >=stealth,gray] (u_1) edge node {} (u_0);
   \path[->, >=stealth,gray] (u_2) edge[bend right] node {} (u_0);
   \path[->, >=stealth,gray] (u_1) edge[bend left] node {} (u_3);
   \path[->, >=stealth,gray] (u_3) edge[bend left] node {} (u_1);
   \path[->, >=stealth,gray] (u_0) edge [loop above] node{} (u_0);
   \path[->, >=stealth,gray] (u_1) edge [loop above] node{} (u_1);
   \path[->, >=stealth,gray] (u_2) edge [loop below] node{} (u_2);
   \path[->, >=stealth,gray] (u_3) edge [loop below] node{} (u_3);
    
   \path[thick,->, >=stealth,blue!60!black] (u_0) edge[bend left] node [above]{\scriptsize$11${\color{gray},21,31}} (u_1);
   \path[thick,->, >=stealth,blue!60!black] (u_1) edge[bend left] node [right]{\scriptsize \renewcommand{\tabcolsep}{0pt} \begin{tabular}{c}{\color{gray}12,}\\$22${\color{gray},}\\{\color{gray}32}\end{tabular}} (u_2);
   \path[thick,->, >=stealth,blue!60!black] (u_2) edge[bend left] node [below]{\scriptsize{\color{gray}13,23,}$33$} (u_3);

   \path[thick,->, >=stealth,gray] (u_0) edge[bend right,red!60!black] node [left]{\scriptsize \textbf{12,22,32}} (u_2);

\end{tikzpicture}%

    \end{minipage}
    \vspace{-2mm}
    
    \caption{Performance of a greedy policy every $10,000$ training steps in the recall task.}
    \label{fig:nm-shortcut}
\end{figure} 

The purpose of the recall task is to show that even if a memoryless policy for a memory-augmented environment is globally optimal, the memory-augmented environment itself might not be an MDP. Figure~\ref{fig:nm-shortcut} shows a transition diagram for the recall task using an OA1 memory. Since the observation is always the same, the different states that the agent encounters only differ by the state in the memory. In the diagram, nodes represent the memory states and the transitions show how the memory is updated by the agent's actions. Note that node $i$ represents that the memory buffer contains $\tuple{o,a_i}$ and that the buffer starts empty ($\emptyset$). For the action labels, the first number indicates the action number in the sub-environment (1, 2, or 3). The second character represents the memory action. For instance, $2\top$ represents that the agent executed $a_2$ in the sub-environment and \emph{saved} that action into the memory buffer. The label o/w stands for otherwise. The blue arrows show a deterministic memoryless policy that optimally solves this problem. That is, execute $1\top$, then $2\top$, and finally $3\top$.

Notice that this memory-augmented environment has a memoryless policy that is optimal for the original POMDP, but it is not an MDP. The reason is that the reward given by the bottom transition $3\top$ will be 0 or 1 depending on the history. If the agent follows the blue path, it will give a reward of one. In contrast, if the agent follows the red arrow, it will get a reward of zero. Something similar occurs when using B2 memory, which is the smallest B$k$ memory that can encode an optimal policy for the recall task. This ``non-Markovianess" impacts the performance of RL agents that explicitly exploit the Markovian assumption. For example, if we run q-learning and evaluate the performance of the greedy policy (i.e., without exploration) every $10,000$ steps, we see that q-learning does not converge. Instead, q-learning jumps between an optimal policy and a zero reward policy, 
as shown in Figure~\ref{fig:nm-shortcut}.

Now that we know that memory-augmented environments are not MDPs, we focus on proving that they are POMDPs. Such a proof can be found in Appendix~\ref{app:mem2pomdp} and has important repercussions. In particular, all the theory for learning memoryless policies for POMDPs \citep{littman1994memoryless,singh1994learning,jaakkola1995reinforcement,li2011finding} also applies to memory-augmented environments. We explore this further in two parts: the prediction problem and the control problem.

\subsection{How to Evaluate Policies in Memory-Augmented Environments}
\label{sec:pol_eval}

For a given POMDP $\mathcal{P} = \tuple{S,O,A,R,p,\omega,\gamma,\mu}$ and a memoryless policy $\pi(a|o)$, the policy prediction problem consists of estimating $q_{\pi}(o,a)$. The POMDP theory shows that Monte-Carlo estimates are guaranteed to converge to the real values of $q_{\pi}(o,a)$, though they do have high variance. In contrast, TD estimates have lower variance but might not converge to $q_{\pi}(o,a)$ \citep{singh1994learning}. 

Failing to correctly estimate $q_{\pi}(o,a)$ is the reason behind q-learning's instability in the recall task (Figure~\ref{fig:nm-shortcut}). For instance, let $\pi$ be the optimal policy represented by blue arrows in OA1, then the real q-value for the red arrow $q_{\pi}(\emptyset,2\top)$ is zero (the agent gets no reward if it executes $a_2$ in the first action). However, a one-step TD estimate would converge to $q_{\pi}(\emptyset,2\top) = 0 + \gamma q_{\pi}(2,3\top) = \gamma$. This is a problem since now $q_{\pi}(\emptyset,2\top) > q_{\pi}(\emptyset,1\top) = \gamma^2$ (for $\gamma\in(0,1)$), and so q-learning will move from the current optimal policy $\pi$ to the zero reward policy that executes $2\top$ in $\emptyset$. We refer to these types of transitions as \emph{non-Markovian shortcuts}. Note that, as Figure~\ref{fig:nm-shortcut} shows, the B2 memory has more non-Markovian shortcuts than OA1. This is why q-learning over B2 is more unstable than q-learning over OA1 in this domain. More generally, we would expect that B$k$ memories introduce more non-Markovian shortcuts than OA$k$ memories since they are more flexible, which could partially explain the better empirical performance of OA$k$ and O$k$ memories.

There are two approaches that can mitigate this problem. The first is to use $n$-step TD estimates, with a large enough value of $n$. As Figure~\ref{fig:gravity2} shows, the performance of 20-step actor-critic in the gravity domain is far superior to 5-step actor-critic. The second is to increase the size of the memory, since doing so tends to remove non-Markovian shortcuts. This is also shown in Figure~\ref{fig:gravity2}, as 5-step actor-critic performs better when using O2 or OA2, than when using O1 or OA1.

\begin{figure}
    \centering
    \begin{minipage}[t]{0.31\textwidth}
        \centering
        5-step actor-critic
        
        \vspace{1mm}
        \includegraphics[]{plots_pdf/grav_ac1.pdf}
    \end{minipage}%
    \begin{minipage}[t]{0.31\textwidth}
        \centering
        5-step actor-critic
        
        \vspace{1mm}
        \includegraphics[]{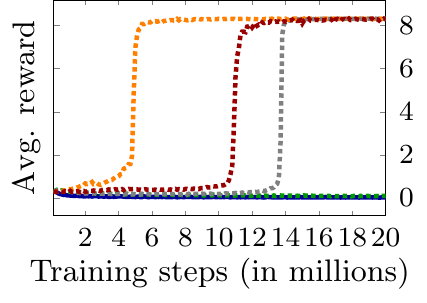}
    \end{minipage}%
    \begin{minipage}[t]{0.31\textwidth}
        \centering
        20-step actor-critic
        
        \vspace{1mm}
        \includegraphics[]{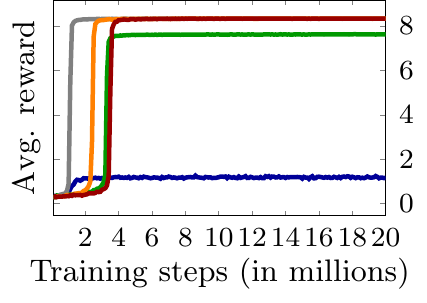}
    \end{minipage}%

    \vspace{0mm}
    {\renewcommand{\tabcolsep}{3pt}
\small\begin{tikzpicture}
\node {%
	\begin{tabular}{lllllllll}
    \entry{blue!60!black}{None} &
	\entrydashed{gray}{K2} &
	\entrydashed{green!60!black}{B2}&
	\entrydashed{orange}{O2} &
	\entrydashed{red!60!black}{OA2}&
	\entry{gray}{K1} &
	\entry{green!60!black}{B1}&
	\entry{orange}{O1} &
	\entry{red!60!black}{OA1}
	\end{tabular}};
\end{tikzpicture}}%
    \vspace{-2mm}
    
    \caption{Experiments in the gravity domain. We reported the avg. reward per 100 steps.}
    \label{fig:gravity2}
\end{figure}

\subsection{How to Improve Policies in Memory-Augmented Environments}
\label{sec:pol_improvement}

We now focus our attention on the second part of the problem: how to use q-value estimates to find better policies. To do so, most RL algorithms exploit the policy improvement theorem. For MDPs, this theorem guarantees that updating the current policy $\pi$ by any amount towards the greedy policy $\tau(s) = \argmax_{a \in A} q_{\pi}(s,a)$ will lead to better policies \citep{watkins1989learning,sutton2018reinforcement}. 

When learning memoryless policies for POMDPs, it is known that the policy improvement theorem only works locally \citep{jaakkola1995reinforcement}. To see why, recall that we are estimating q-values over observations $o \in O$ and not over states $s\in S$. Formally, the q-value $q_{\pi}(o,a)$ is defined as follows: $q_{\pi}(o,a) = \sum_{s\in S} P_{\pi}(s|o)q_{\pi}(s,a)$, for all $o\in O$, $a \in A$. Here, $P_{\pi}(s|o)$ is the probability of being in state $s$ given that the observation is $o$, when following policy $\pi$. Intuitively, the policy improvement theorem does not work generally here because moving $\pi(o)$ towards $\tau(o) = \argmax_{a \in A} q_{\pi}(o,a)$ increases the expectation over $q_{\pi}(s,a)$ without considering how $P_{\pi}(s|o)$ might change. Conversely, the policy improvement theorem works locally because updating $\pi$ by a small amount will also only have a small effect on $P_{\pi}(s|o)$, making such a difference insignificant. Therefore, a policy learning method that takes small update steps is guaranteed to converge to locally optimal memoryless policies---explaining why actor-critic converges smoothly in the gravity domain (Figure~\ref{fig:gravity}). Unfortunately, convergence to optimal memoryless policies is not guaranteed for general POMDPs. 

Since memory-augmented environments are a form of POMDP, this local convergence guarantee also applies to them. As such, if the memory can represent the optimal policy, then that solution will be stable given accurate q-value estimations. 
This raises the question of what conditions for memory would guarantee convergence to a globally optimal memoryless policy. To investigate this topic, we considered an idealized version of \citet{jaakkola1995reinforcement}'s approach. This agent starts from a random policy $\pi$ and uses an oracle to compute $q_{\pi}(o,a)$ for all $o \in O$ and $a \in A$. Then, it moves $\pi(o)$ towards $\argmax_{a \in A} q_{\pi}(o,a)$ a small step $\delta$ for all $o \in O$, and repeats.

Figure~\ref{fig:policy-improvement} shows the behaviour of this algorithm on a variant of the \emph{recall task}. The rewards were selected to encourage convergence to suboptimal solutions (more details in Appendix~\ref{app:var_recall}). In this environment, OA1 and B1 are enough to encode a memoryless policy that is globally optimal. However, note that OA1 converges to a suboptimal solution. Therefore, memory-augmented environments might converge to suboptimal solutions even if the memory is expressive enough to encode globally optimal policies. We do note that this problem vanishes as we increase the size of the memory in this domain. Unfortunately, convergence to an optimal memoryless policy cannot be guaranteed, even for memories that can model the belief states, as we prove for B$k$ in Appendix~\ref{app:sub_solutions}.

\begin{figure}
    \centering
    \begin{minipage}[t]{0.3\textwidth}
        \centering

        \vspace{0mm}
        \includegraphics[]{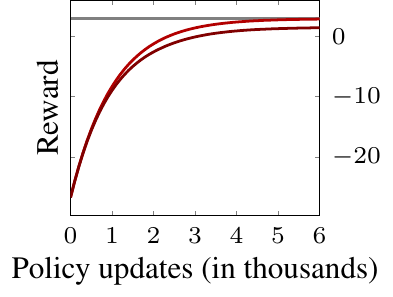}
    \end{minipage}%
    \begin{minipage}[t]{0.2\textwidth}
        \phantom{OA1 env.}
        
        {\small\begin{tikzpicture}
\node {%
	\begin{tabular}{l}
    \textbf{Memories}:\\
	\entry{red!70!black}{OA2} \\
	\entry{red!50!black}{OA1}
	\end{tabular}};
\end{tikzpicture}}%
        
        {\small\begin{tikzpicture}
\node {%
	\begin{tabular}{l}
    \textbf{Other}:\\
	\entry{gray}{Optimal}
	\end{tabular}};
\end{tikzpicture}}%
    \end{minipage}%
    \begin{minipage}[t]{0.3\textwidth}
        \centering

        \vspace{0mm}
        \includegraphics[]{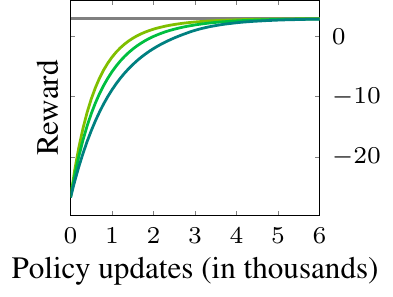}
    \end{minipage}%
    \begin{minipage}[t]{0.2\textwidth}
        \centering
        \phantom{B2 env.}
        
        {\small\begin{tikzpicture}
\node {%
	\begin{tabular}{l}
    \textbf{Memories}:\\
	\entry{green!50!orange}{B5}\\
	\entry{green!50!teal}{B2}\\
	\entry{green!50!blue}{B1}
	\end{tabular}};
\end{tikzpicture}}%

    \end{minipage}
    \vspace{-2mm}
    
    \caption{Tabular experiments in a variation of the recall task (details in Appendix~\ref{app:var_recall}).}
    \label{fig:policy-improvement}
\end{figure} 

\subsection{Summary: From Theory to Practice}

The theory suggests that the best approaches for learning effective memoryless policies in memory-augmented environments are methods that exploit the policy improvement theorem locally and evaluate policies using Monte-Carlo estimates (or n-step TD methods), such as n-step actor-critic, A3C \citep{mnih2016asynchronous}, or PPO \citep{schulman2017proximal}. Our empirical evidence also suggests the use of O$k$ or OA$k$ memories over B$k$ memories. While the core of our experimental analysis uses PPO, we also tested pure TD methods, including Sarsa($\lambda$) \citep{seijen2014true} and DDQN \citep{van2016deep}. Those results, which are shown in Appendix~\ref{app:more_results}, also favor O$k$ memories. Finally, note that integrating external memories into existing RL toolkits is trivial. For instance, it takes less than 40 lines of code to integrate each external memory into OpenAI gym \citep{openai2016gym}.

\section{Experimental Evaluation}
\label{sec:results}

\begin{figure}
    \centering
    \begin{minipage}[t]{0.33\textwidth}
        \centering
        Hallway-Cookie

        \includegraphics[]{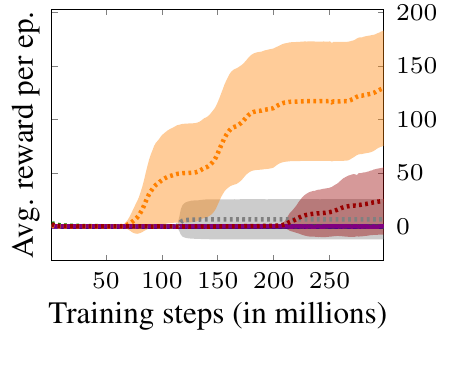}

        \vspace{-4mm}
        Hallway-Keys

        \includegraphics[]{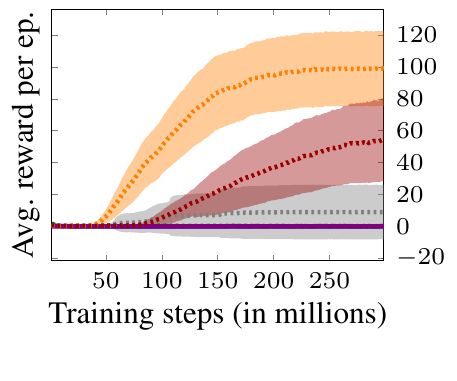}
    \end{minipage}%
    \begin{minipage}[t]{0.325\textwidth}
        \centering
        MiniGrid-RedBlueDoors

        \includegraphics[]{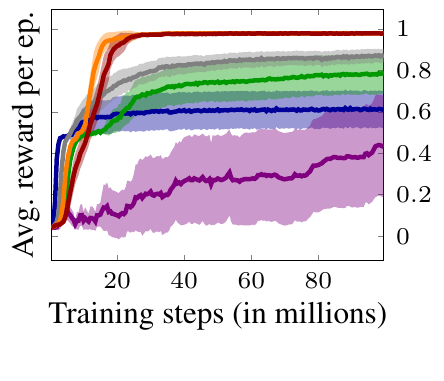}

        \vspace{-4mm}
        MiniGrid-MemoryS7

        \includegraphics[]{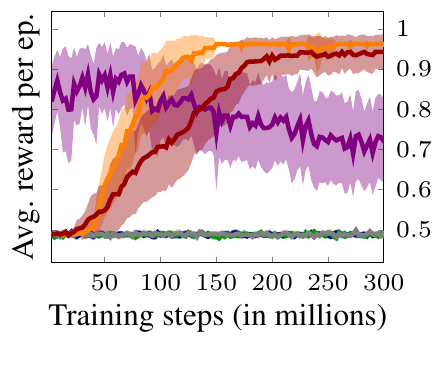}
    \end{minipage}%
    \begin{minipage}[t]{0.345\textwidth}
        \centering
        Atari-Pong

        \includegraphics[]{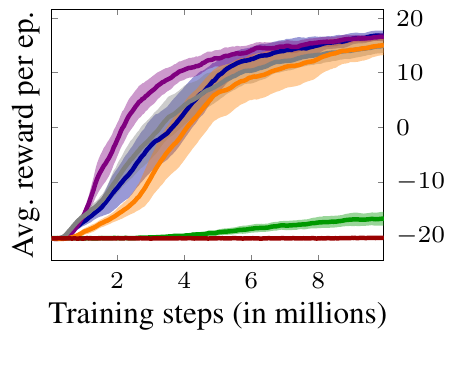}
        
        \vspace{-4mm}
        Atari-Seaquest

        \includegraphics[]{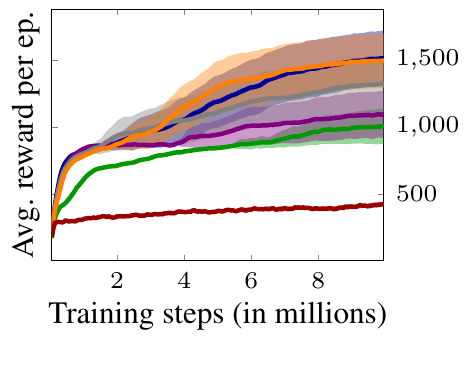}
    \end{minipage}%

    \vspace{-3mm}
    {\renewcommand{\tabcolsep}{3pt}
\small\begin{tikzpicture}
\node {%
	\begin{tabular}{llllllllll}
	\entrydashedsmall{gray}{K6} &
	\entrydashedsmall{green!60!black}{B6}&
	\entrydashedsmall{orange}{O6} &
	\entrydashedsmall{red!60!black}{OA6}&
    \entrysmall{blue!60!black}{None} &
	\entrysmall{gray}{K3} &
	\entrysmall{green!60!black}{B3}&
	\entrysmall{orange}{O3} &
	\entrysmall{red!60!black}{OA3} &
	\entrysmall{violet}{LSTM} 
	\end{tabular}};
\end{tikzpicture}}%
    \vspace{-2mm}
    
    \caption{Results over partially observable benchmarks using PPO and different memories.}
    \label{fig:results}
\end{figure} 

We ran experiments on a variety of environments with different types of external memory, including our new O$k$ and OA$k$ memories, as well as the existing k-order memories (K$k$) \citep{mnih2015human} and binary memories (B$k$) \citep{littman1993optimization}. Below, we present results when using PPO \citep{schulman2017proximal} and these memories.  We also experimented with using no memory (None) and when using an LSTM. Figure~\ref{fig:results} shows the results. Each line is the average reward per episode over 30 runs and the shadow area represents half a standard deviation. Details of the domains, hyperparameters, and network architectures can be found in Appendix~\ref{app:experiments}.

The left column shows results in the Hallway environments. These environments have been shown to be difficult for PPO with LSTM-based memory in previous work \citep{toroicarteWKVCM2019learning} and, indeed, we were also unable to get PPO with LSTMs to perform any better than a random policy. In contrast, PPO with OA6 and O6 memories is able to solve these tasks. 

The middle column shows results in the MiniGrid environment \citep{gym_minigrid}. We experimented with the RedBlueDoors and MemoryS7 environments because they were specifically designed to test the agent's memory capabilities. We also decreased the agent's field of view from 8x8 to 3x3 cells to make these problems more challenging. In both cases, O3 and OA3 perform best, as they consistently converge to good solutions on all runs. In contrast, the LSTM performance was unreliable: we note that around half of the LSTM runs converged to poor policies. 

The previous results used feed-forward networks for function approximation in grid-like domains. To test our approach in visually complex domains using convolutional networks, we also experimented with two Atari games: Pong and Seaquest. For these domains, we only gave the agent one frame of the game at a time (aside from the current memory state) and followed \citet{machado2018revisiting}'s recommendations for making the environment stochastic. These domains are almost fully-observable, so it is unreasonable to expect O$k$, B$k$, or OA$k$ to outperform a k-order memory. Still, O3 has comparable performance to K3 in Pong and outperformed LSTMs in Seaquest. This shows that O$k$ memories can work well in visually complex domains. Note that OA3 performs well in Atari when trained by DDQN (see Appendix~\ref{app:more_results}) but it does not when using PPO.

Finally, we note that learning memoryless policies is usually faster than learning history-based policies. In fact, training PPO with an O$k$ memory was between $1.06$ to $9.85$ times faster than training PPO with an LSTM when using CPUs and between $1.71$ to $2.94$ times faster when using GPUs. The complete list of speedups can be found in Appendix~\ref{app:speedups}.

\section{Concluding Remarks}

This work presented a lightweight approach to tackling partially observable RL. We provided the agent with an external memory and extra actions to write to it, and then used RL to learn a memoryless policy that jointly decides what to do and what to remember. This idea has been around since the 90s, but this is the first work to show how to make it work well in practice. The key step was to study the theory behind memory-augmented environments and to use that theory to propose novel forms of memories that support learning. Experimental results confirmed the effectiveness of our approach. Using the same RL agent, our external memories outperformed LSTM memories while being also faster to train and trivial to implement. Our results suggests a broad array of topics for future exploration from exploring the effectiveness of other forms of memories to furthering the theory of RL in memory-augmented environments.

\subsubsection*{Acknowledgments}
We thank Radford M. Neal for an insightful discussion about partial observability during a poster session. He told us about the work done by \citet{peshkin1999learning} that marked the beginning of this research project. We gratefully acknowledge funding from the Natural Sciences and Engineering Research Council of Canada (NSERC), the Canada CIFAR AI Chairs Program, and Microsoft Research. The first author also gratefully acknowledges funding from ANID (Becas Chile). Resources used in preparing this research were provided, in part, by the Province of Ontario, the Government of Canada through CIFAR, and companies sponsoring the Vector Institute for Artificial Intelligence \url{www.vectorinstitute.ai/partners}.

\bibliography{main.bib}
\bibliographystyle{style/iclr2021_conference}

\appendix

\section{Formal Analysis}

\subsection{Memory-Augmented Environments as POMDPs}
\label{app:mem2pomdp}

In this section, we show how to define memory-augmented environments as POMDPs. Given a memory-augmented environment $\mathcal{E}=\tuple{\mathcal{P},\mathcal{M}_{\mathcal{P}}}$, where $\mathcal{P} = \tuple{S,O,A,R,p,\omega,\gamma,\mu}$ is a POMDP and $\mathcal{M}_{\mathcal{P}}=\tuple{M,W,\Gamma,\eta}$ is an external memory module for $\mathcal{P}$, we define the POMDP $\mathcal{P'} = \tuple{S',O',A',R',p',\omega',\gamma',\mu'}$ that corresponds to memory-augmented environment $\mathcal{E}=\tuple{\mathcal{P},\mathcal{M}_{\mathcal{P}}}$ as follows:
\begin{itemize}
    \item $S' = S \times M \times O$
    \item $O' = M \times O$
    \item $A' = A \times W$
    \item $p'(\tuple{s',m',o'},r|\tuple{s,m,o},\tuple{a,w}) = p(s',r|s,a) \Gamma(m'|m,w,o,a,r,o') \omega(o'|s')$
    \item $\omega'(\tuple{m',o'}|\tuple{s,m,o}) =
                \begin{cases}
                    1 & \mathrm{if }~~ m'=m ~~\mathrm{ and }~~ o'=o \\
                    0 & \mathrm{otherwise}\\
                \end{cases}$
    \item $R' = R$
    \item $\mu'(s,m,o) = \mu(s) \eta(m) \omega(o|s)$
\end{itemize}

Note that $O$ has been included in $S'$ to ensure the consistency of $p'$ and $\omega'$ in the case that the next observation is stored in memory. Note also that, by construction, $\mathcal{P'}$ is equivalent to $\mathcal{E}$ because both environments generate rewards and observations following the same probability distributions from the agent's perspective.

\subsection{The Expressiveness of an External Memory Module}
\label{app:expressive}

In this section, we show that if an external memory module is expressive enough, then the optimal memoryless policy for the memory-augmented environment corresponds to an optimal history-based solution to the original POMDP. We begin with the following definition:

\begin{definition}[memory-update function]
Given memory-augmented environment, ${\cal E}=\tuple{\mathcal{P},\mathcal{M}_{\mathcal{P}}}$, where POMDP $\mathcal{P} = \tuple{S,O,A,R,p,\omega,\gamma,\mu}$ and external memory module $\mathcal{M}_{\mathcal{P}}=\tuple{M,W,\Gamma,\eta}$, a \emph{memory-update} function for $\mathcal{E}$ is defined as $f: M \times O \times A \rightarrow W$.
\end{definition}
The memory-update function is akin to a deterministic policy for manipulating the memory (and only the memory). It does not determine how we select environment actions, it only dictates what memory-writing action to take once an environment action has been selected, given the current memory state, observation, and the action to be executed. Below, we define a criteria for an external memory module to be  able to represent the optimal history-based policy for a POMDP, through the existence of a memory-updating function (i.e., a way for manipulating the memory) that can sufficiently summarize the interaction's history.

Let $\mathcal{H} = o_0, a_0, \ldots, a_{t-1}, o_t$ be the observation-action history for the original POMDP $\mathcal{P}$.  We can use the memory-update function $f$ to construct a corresponding observation-action history ${\cal H}'$ in the memory-augmented environment, where each $o_i$ is replaced by a tuple $\tuple{m_i, o_i}$ and each $a_i$ is replaced by a tuple $\tuple{a_i, w_i}$. In particular, we can sample an initial memory state $m_0$ according to $\eta$, set the initial observation in the memory-augmented environment as $\tuple{m_0, o_0}$, set the first action as $\tuple{a_0, f(m_0, o_0, a_0)}$, and set the next observation as $\tuple{o_1, m_1}$ where $m_1$ is sampled from $\Gamma$. This process can then be continued until the end of $\mathcal{H}$ to create a history $\mathcal{H'}$ for the memory-augmented environment.

In the analysis below, we will need to refer to the last memory state $m_t$ of such a history $\mathcal{H'}$. However, since $\eta$ and $\Gamma$ can be non-deterministic, there may be multiple valid memory-augmented histories $\mathcal{H'}$ that can be generated in this way for any $\mathcal{H}$. Thus, this generation process may result in any one of a set of last memory states. For any given history $\mathcal{H}$, we will refer to this set as $\Omega_f(\mathcal{H}) \in 2^M$.

Recall that the optimal policy for a POMDP $\mathcal{P} = \tuple{S,O,A,R,p,\omega,\gamma,\mu}$ is a function of the history of an interaction: $\pi^*(a_t|o_0, a_0, \ldots, a_{t-1},o_t)$. When the POMDP model is available, it can also be expressed in terms of \emph{belief states}. A belief state at step $t$ is a probability distribution over (being in) each of the states in $S$, defined as $b_t:S \rightarrow [0,1]$. The initial belief state is computed using the initial observation $o_0$: $b_0(s) \propto \omega(o_0|s)$ for all $s \in S$. The belief state $b_{t+1}$ is then determined from the previous belief state $b_t$, the executed action $a_t$, and the resulting observation $o_{t+1}$ as $b_{t+1}(s') \propto\omega(o_{t+1}|s') \sum_{s \in S}{p(s,a_t,s')b_t(s)}$ for all $s' \in S$. In this way, the belief state correctly summarizes the history of an interaction, meaning that the optimal policy for $\mathcal{P}$ can then be written as a policy of the belief states $\pi^*(a_t|b_t)$.
For convenience, we will let $b(\mathcal{H})$ be the belief state $b_t$ for history $\mathcal{H} = o_0, a_0, \ldots, a_{t-1}, o_t$.

We can now use the notion of a belief state to define the following:
\begin{definition}[sufficiently expressive]\label{def:expressive}
Given a memory-augmented environment $\mathcal{E}=\tuple{\mathcal{P},\mathcal{M}_{\mathcal{P}}}$, an external memory module $\mathcal{M}_{\mathcal{P}}$ is \emph{sufficiently expressive} for $\mathcal{P}$ if there exists a memory-update function $f$ for $\mathcal{E}$ such that for any two histories $\mathcal{H}^{1} = o^{1}_{0}, a^{1}_{0}, \ldots, a^{1}_{t-1}, o^{1}_{t}$ and $\mathcal{H}^{2} =o^{2}_{0}, a^{2}_{0}, \ldots, a^{2}_{i-1}, o^{2}_{i}$ 
where $o^{1}_{t} = o^{2}_{i}$, the following holds:
\begin{align*}
    \mathrm{if~} b(\mathcal{H}^{1}) \neq b(\mathcal{H}^{2}) \mathrm{, then~} \Omega_f(\mathcal{H}^{1}) \cap \Omega_f(\mathcal{H}^{2}) = \emptyset
\end{align*}
\end{definition}

Intuitively, given a memory augmented environment, $\mathcal{E}=\tuple{\mathcal{P},\mathcal{M}_{\mathcal{P}}}$ the external memory module $\mathcal{M}_{\mathcal{P}}$ is sufficiently expressive, if there is a way to use it to distinguish between belief states in the POMDP $\mathcal{P}$. If this holds, then there is a memoryless policy for the memory-augmented environment that can act differently (as needed) for each individual belief state. Thus, the following holds immediately:

\begin{proposition}\label{thm:optimal}
If an external memory module $\mathcal{M}_{\mathcal{P}}$ is sufficiently expressive for $\mathcal{P}$, then the optimal memoryless policy for the memory-augmented environment $\mathcal{E}=\tuple{\mathcal{P},\mathcal{M}_{\mathcal{P}}}$ is equivalent to the optimal history-based policy for $\mathcal{P}$.
\end{proposition}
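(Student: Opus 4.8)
The plan is to show both directions of the equivalence, using the "sufficiently expressive" hypothesis to build a bijection-like correspondence between history-based behavior in $\mathcal{P}$ and memoryless behavior in $\mathcal{E}$. Recall that by the earlier result (Appendix~\ref{app:expressive}'s belief-state summary), the optimal policy for $\mathcal{P}$ can be written as $\pi^*(a_t \mid b_t)$, a function of the belief state alone. The key observation to exploit is that a memoryless policy for $\mathcal{E}$ is a function $\bar\pi(\bar a \mid \bar o)$ of the augmented observation $\bar o = \tuple{m, o}$, and that $\bar a = \tuple{a, w}$ factors into an environment action and a memory-writing action. So the correspondence I want is: an environment-action policy over belief states on one side, versus an environment-action policy over $\tuple{m,o}$ pairs on the other.

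First I would fix the memory-update function $f$ whose existence is guaranteed by the \textbf{sufficiently expressive} hypothesis, and use it to define the memory-writing component of the candidate memoryless policy: whenever the agent is about to execute $a$ in augmented observation $\tuple{m,o}$, it writes $w = f(m,o,a)$. This reduces the memory-writing choice to a deterministic function of the current situation, so the only remaining freedom in the memoryless policy is the choice of environment action $a$ as a function of $\tuple{m,o}$. Second, I would argue that under this fixed $f$, each reachable memory-augmented history $\mathcal{H}'$ ends in a memory state $m_t \in \Omega_f(\mathcal{H})$ that, together with the shared final observation $o_t$, determines the belief state $b(\mathcal{H})$: the expressiveness condition says distinct belief states (sharing a final observation) have disjoint $\Omega_f$ sets, so the pair $\tuple{m_t, o_t}$ cannot be consistent with two different belief states. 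In other words, the augmented observation $\tuple{m_t, o_t}$ refines (is at least as informative as) the belief state $b_t$.

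The two directions then follow. For the "$\geq$" direction, given the optimal history-based (equivalently, belief-state) policy $\pi^*(a \mid b)$ for $\mathcal{P}$, I define the memoryless policy $\bar\pi$ in $\mathcal{E}$ by $\bar\pi(\tuple{a,w} \mid \tuple{m,o}) = \pi^*(a \mid b)$ when $w = f(m,o,a)$ and $\tuple{m,o}$ is consistent with belief state $b$ (and $0$ for $w \neq f(m,o,a)$); this is well-defined precisely because of the disjointness of the $\Omega_f$ sets, so each reachable $\tuple{m,o}$ maps to a unique $b$. Since the memory dynamics under $f$ faithfully track belief states and the reward functions coincide ($R' = R$), this $\bar\pi$ induces the same distribution over reward sequences as $\pi^*$, hence achieves the optimal POMDP return. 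For the "$\leq$" direction, any memoryless policy for $\mathcal{E}$ can only condition on information available to the agent—the history of augmented observations—so it is no more powerful than an arbitrary history-based policy for $\mathcal{P}$, which $\pi^*$ already dominates. Combining, the optimal memoryless policy for $\mathcal{E}$ achieves exactly the optimal POMDP value.

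\textbf{The main obstacle} I anticipate is making the "consistency" and well-definedness precise: I must verify that the generation process of $\mathcal{H}'$ from $\mathcal{H}$ via $f$ produces memory states that are actually reachable under $\bar\pi$'s own dynamics (a fixed-point / invariant argument on the induced Markov chain over $\tuple{s,m,o}$ in the POMDP $\mathcal{P}'$ of Appendix~\ref{app:mem2pomdp}), and that nondeterminism in $\eta$ and $\Gamma$ does not break the map $\tuple{m,o} \mapsto b$ from being single-valued on the reachable set. The expressiveness hypothesis is exactly what rules out the bad case—two histories with different belief states but a common reachable $\tuple{m_t,o_t}$—so the crux is a careful induction on $t$ showing that belief state and augmented observation stay in lockstep along every trajectory, after which the value equality is a routine consequence of the shared reward structure.
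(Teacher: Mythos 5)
Your proposal is correct and takes essentially the same route as the paper: the paper's entire argument is the remark that a sufficiently expressive memory lets a memoryless policy over $\tuple{m,o}$ act differently for each belief state, after which the proposition is asserted to hold ``immediately.'' Your write-up supplies the details the paper leaves implicit---fixing the memory-update function $f$ for the write component, using disjointness of the $\Omega_f$ sets to make the map $\tuple{m,o}\mapsto b$ single-valued on reachable pairs, and closing with the two-sided value comparison---so it is a faithful, more rigorous rendering of the paper's intended proof rather than a different one.
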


\subsection{Formal Definitions for K$k$ and OA$k$ Memories}
\label{app:memories}

In this section, we formally define k-order memories and OA$k$ memories as external memory modules. We begin with k-order memories:

\begin{definition}[K$k$ memories]
For POMDP $\mathcal{P} = \tuple{S,O,A,R,p,\omega,\gamma,\mu}$, a \emph{K$k$ external memory module} for $\mathcal{P}$ of size $k$ is defined as $\mathcal{M}_{\mathcal{P}}=\tuple{M,W,\Gamma,\eta}$, where $M = (O \cup \{\emptyset\})^k$, $W = \{\top\}$, $\eta(\emptyset^k) = 1$ (zero otherwise), and $\Gamma(m'|m,w,o,a,r,o') = 1$ if $m = \tuple{o^1, o^2, \cdots, o^k}$ and $m' = \tuple{o^2, \cdots, o^k, o}$ (zero otherwise).
\end{definition}

We now define OA$k$ memories as follows:
\begin{definition}[OA$k$ memories]
For POMDP $\mathcal{P} = \tuple{S,O,A,R,p,\omega,\gamma,\mu}$, an \emph{OA$k$ memory} for $\mathcal{P}$ is defined as $\mathcal{M}_{\mathcal{P}}=\tuple{M,W,\Gamma,\eta}$, where $M = ((O\times A ) \cup \{\emptyset\})^k$, $W = \{\top,\bot\}$, $\eta(\emptyset^k) = 1$ (zero otherwise), and $\Gamma(m'|m,w,o,a,r,o') = 1$ if $w=\bot$ and $m'=m$, or $w=\top$, $m = \tuple{e^1, e^2, \cdots, e^k}$, and $m' = \tuple{e^2, \cdots, e^k, (o,a)}$ where $e^i=(o^i,a^i)$ or $e^i=\emptyset$ (zero otherwise).
\end{definition}

\subsection{Sufficiency of K$k$, B$k$, O$k$, and OA$k$}
\label{app:memory_suff}

In this section, we formally show that the external memory modules described in this work are sufficiently expressive if the buffer size is large enough:

\begin{proposition}
For any POMDP $\mathcal{P}$, the following holds:
\begin{enumerate}
    \item B$k$ is sufficiently expressive for $\mathcal{P}$ if $k \geq \lceil \log_2|O|\rceil + \lceil \log_2|A| \rceil + \lceil \log_2 u \rceil$, where $u$ is the maximum number of possible belief states in the set of all histories that end in any given observation $o$. That is, $u$ is defined as
    
    \begin{align}
        u = \max_{o \in O} |\lbrace b(\mathcal{H}) \mid \mathcal{H} \mathrm{~is~a~ history~that~ends~in~} o\rbrace|
    \end{align}
    \item OA$k$ is sufficiently expressive for $\mathcal{P}$ if the belief state of any history is dependent on only the last $k$ observation-action pairs.
    \item O$k$ and K$k$ are sufficiently expressive if the belief state of any history only depends on the last $k$ observations.
\end{enumerate}
\end{proposition}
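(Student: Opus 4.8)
The plan is to establish each of the three claims by exhibiting a concrete memory-update function $f$ and verifying the condition of Definition~\ref{def:expressive}: whenever two histories $\mathcal{H}^1, \mathcal{H}^2$ end in the same observation but have distinct belief states, the resulting last-memory-state sets $\Omega_f(\mathcal{H}^1)$ and $\Omega_f(\mathcal{H}^2)$ are disjoint. Since all four memory modules here have deterministic $\Gamma$ and deterministic $\eta$ (each puts all mass on a single initial memory state), each $\Omega_f(\mathcal{H})$ is a singleton, so disjointness reduces to showing the two last memory states differ. The unifying strategy is therefore: choose $f$ so that the final memory state is an injective encoding of whatever quantity determines the belief state, and then invoke the contrapositive---equal memory states force equal determining quantities, hence equal belief states, contradicting the hypothesis.

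First I would handle parts (2) and (3), which are the cleaner cases. For OA$k$, I take $f$ to always write ($w=\top$), so the memory always holds the last $k$ observation-action pairs (padded with $\emptyset$ at the start). The belief-state dependence assumption says $b(\mathcal{H})$ is a function of exactly this window, so if the two final memory states coincide, the last-$k$ windows coincide and thus the belief states are equal. For O$k$ and K$k$ the argument is identical with observations in place of observation-action pairs: again set $f\equiv\top$ (for K$k$ the only write option is $\top$ anyway), so the memory records the last $k$ observations, and the stated dependence gives the result. These parts are essentially a matter of noting that the ``always push'' policy makes the memory into a verbatim sliding window, after which the assumption does the work.

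The main obstacle is part (1), the B$k$ bound, because binary memory has no intrinsic structure---the memory-update function must actively compute an encoding rather than passively record. Here the idea is that at observation $o$, the history is summarized (for the purpose of distinguishing belief states that can arise with that final observation) by the triple consisting of the last observation $o$, the last action $a$, and which of the at most $u$ belief states reachable with final observation $o$ the history induces. I would argue there exists a memory-update function $f$ whose resulting final memory state deterministically encodes this triple in binary: $\lceil\log_2|O|\rceil$ bits for the observation, $\lceil\log_2|A|\rceil$ bits for the action, and $\lceil\log_2 u\rceil$ bits to index the belief state among the $\le u$ possibilities. The delicate point is justifying that such an $f$ \emph{can} be realized by the memory-writing mechanism: $f$ sees $(m,o,a)$ and chooses $w$, which for B$k$ directly sets $m'=w$, so $f$ may write any target string it likes; the real content is that the belief-state index at step $t{+}1$ is determined by the belief-state index at step $t$ together with $(a_t,o_{t+1})$ (by the belief-update recursion in Appendix~\ref{app:expressive}), so a consistent indexing can be maintained inductively and computed by $f$. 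I would make this precise by fixing, for each observation $o$, an injection from its $\le u$ reachable belief states into $\{0,1\}^{\lceil\log_2 u\rceil}$, then showing the final memory state recovers $b(\mathcal{H})$ via this injection; equal final memory states then force equal $(o,a,\text{index})$, hence equal belief states, establishing disjointness and completing the claim. The bookkeeping of maintaining a globally consistent belief-state index purely as a function of the current memory and the incoming $(o,a)$ is where the argument requires the most care.
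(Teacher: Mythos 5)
Your proposal is correct and follows essentially the same route as the paper's proof: for OA$k$, O$k$, and K$k$ you use the always-push memory-update function so the buffer is a verbatim sliding window, and for B$k$ you encode the triple (last observation, last action, index of the belief state among the $\le u$ reachable ones) in the three blocks of bits. Your additional care about inductively maintaining a consistent belief-state index via the belief-update recursion is a point the paper's proof glosses over, but it is the same underlying argument.
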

\begin{proof}
For the case of B$k$, recall that to make a belief state update, we use the last belief state $b$, the last observation $o$, the last action $a$, and the current observation $o'$. Thus, we define the memory-update function to use the binary memory to store the first three of these components. Specifically, we can assign a unique integer to each of the possible observations, and use the first $\lceil \log_2|O|\rceil$ bits to store the integer corresponding to $o$. We will do the same for the actions, and store the integer corresponding to the last action in the next $\lceil \log_2|A| \rceil$ bits. Finally, we can uniquely assign an integer to  each of the belief states that we could have been in when at $o$, and we will store that integer in the next $\lceil \log_2 u \rceil$ bits. Given the last belief state, last observation, and last action, the agent will therefore be able to distinguish between the belief state that they are in currently with observation $o'$. Thus, the memory-update function that correctly uses the bits in this way satisfies the conditions in definition \ref{def:expressive}, and so B$k$ is sufficiently expressive for $\mathcal{P}$. Therefore, the statement holds for B$k$.

For the case of OA$k$, if the belief state only depends on the last $k$ observation-action pairs, then the memory-saving function that always saves will be able to distinguish between belief states.
Thus OA$k$ is sufficiently expressive for $\mathcal{P}$ in this case. An analogous argument then applies for O$k$ and K$k$ in the conditions outlined above.
\end{proof}

\section{Variation of the Recall Task}
\label{app:var_recall}

We proposed a variation of the recall task to study whether OA$k$ memories can converge to suboptimal memoryless policies by locally improving its policy, as discussed in Section~\ref{sec:pol_improvement}. This problem has only one observation $o$, two actions $A = \{0,1\}$, and the discount factor is 1. The episode ends after executing 3 actions. The agent always receives no reward except when executing the last action. The final reward depends on the three actions executed during the episode according to the following table (where $a_i$ represents the $i$-th action):
\begin{center}%
	\begin{tabular}{ c c || c c}
		\hline
		$(a_1, a_2, a_3)$& R & $(a_1, a_2, a_3)$ & R\\
		\hline
		(0, 0, 0) & 0 & (1, 0, 0) & -100 \\
		(0, 0, 1) & 2 & (1, 0, 1) & -100 \\
		(0, 1, 0) & 3 & (1, 1, 0) & -10 \\
		(0, 1, 1) & 1 & (1, 1, 1) & -10 \\\hline
	\end{tabular}
\end{center}%
While this problem can be optimally solved using an OA1 memory, the ideal agent from Section~\ref{sec:pol_improvement} converges to a suboptimal policy. The reason is that an optimal policy would execute action $\tuple{0,\top}$ when the memory buffer is $\emptyset$, action $\tuple{1,\top}$ when the buffer contains $\tuple{o,0}$, and action $\tuple{0,\top}$ when the buffer contains $\tuple{o,1}$. However, if the agent, while exploring, executes action $\tuple{1,\top}$ in the first action, then executing $\tuple{0,\top}$ from $\tuple{o,1}$ will give a expected return of -100. As this large penalty will be considered when estimating $q_{\pi}(\tuple{o,1},\tuple{0,\top})$, the agent will prefer action $\tuple{1,\top}$ over the optimal action $\tuple{0,\top}$ in $\tuple{o,1}$ -- causing the convergence to a suboptimal policy.

\section{Convergence to Suboptimal Solutions}
\label{app:sub_solutions}

In this section, we show that having a sufficiently expressive external memory module is not enough to guarantee that a policy-improvement based algorithm will find a memoryless policy that is optimal for the original POMDP. We do so by showing that the \citet{jaakkola1995reinforcement} algorithm will get stuck in a local minimum when using B$k$ memory, for any value of $k$, on a variant of the recall task described in Section~\ref{sec:policy_learning}. We refer to this problem as the \emph{$4$-action recall task}. In this case, an episode consists of 2 steps, and there are four possible actions that can be taken. We denote these actions as $\mathcal{A} = \{0, 1, 2, 3\}$. The rewards received at the end of the episode is shown in the table in Figure~\ref{fig:suboptimal-example}. That figure also shows (empirically) that a B5 memory converges to a suboptimal policy in this problem.  We also note that the optimal policy is given by selecting action $0$ and then action $2$. If any other action is taken as the first action, then the optimal action is to take action $3$. 

\begin{figure}
    \centering
    \begin{minipage}[t]{0.45\textwidth}
        \centering
        \phantom{OA1 env.}
        
        \begin{tabular}{ c r || c r}
            \hline
        	$(a_1, a_2)$& R & $(a_1, a_2)$& R \\
        	\hline
        	(0, 0) & $-5$ & (2, 0) & 0 \\
        	(0, 1) & $0.5$ & (2, 1) & $0.5$ \\
        	(0, 2) & 1 & (2, 2) & $-5$ \\
        	(0, 3) & $0.5$ & (2, 3) & $0.5$ \\
        	\hline
        	(1, 0) & 0 &( 3, 0) & 0 \\
        	(1, 1) & $0.5$ & (3, 1) & $0.5$ \\
        	(1, 2) & $-0.5$ & (3, 2) & $-5$ \\
        	(1, 3) & $0.75$ & (3, 3) & $0.5$ \\\hline
        \end{tabular}%
    \end{minipage}%
    \begin{minipage}[t]{0.35\textwidth}
        \centering

        \vspace{4mm}
        \includegraphics[]{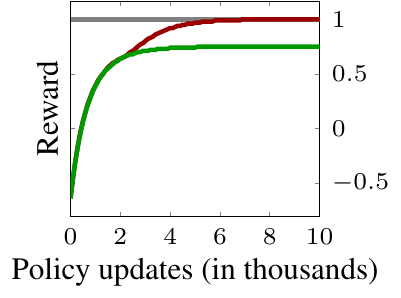}
    \end{minipage}%
    \begin{minipage}[t]{0.2\textwidth}
        \phantom{OA1 env.}
        
        \vspace{5mm}
        {\small\begin{tikzpicture}
\node {%
	\begin{tabular}{l}
    \textbf{Memories}:\\
	\entry{red!60!black}{OA1} \\
	\entry{green!60!black}{B5}
	\end{tabular}};
\end{tikzpicture}}%
        
        {\small\begin{tikzpicture}
\node {%
	\begin{tabular}{l}
    \textbf{Other}:\\
	\entry{gray}{Optimal}
	\end{tabular}};
\end{tikzpicture}}%
    \end{minipage}%
    
    \caption{Variation of the recall task where B$k$ always converges to a suboptimal policy.}
    \label{fig:suboptimal-example}
\end{figure}

We now show that B2 is sufficiently expressive for the $4$-action recall task, by identifying a policy for the task. The memory state will start with a value $00$. When the memory is 00, action $0$ should be taken along with the write action $01$. Then action $2$ should be taken for maximal reward. The memory write action can be arbitrary at this point since the episode is over. If, however, any action other than $0$ is taken in the initial state when the memory is $00$, then the write action should be $11$. In the resulting state, action $3$ will be taken along with any arbitrary memory writing action.

We will see that the q-values for the policy that always takes $(1, 3)$ is a local optimum to this problem. This is shown for B1 in Lemma \ref{lemma:local_optimum}, and generalized later. As a result, algorithms that follow a policy improvement scheme can get stuck at this optimum. We show that this can happen when using an idealized version of the \citet{jaakkola1995reinforcement} algorithm, even when starting from a uniform policy over all actions. This algorithm alternates between policy evaluation and policy improvement. During policy improvement, the current policy $\pi$ is moved a step closer to the policy $\pi^g$ that is greedy on the current q-values of $\pi$. That is, for any observation $o$ and action $a$, the new policy $\pi'$ is given by $\pi'(a \mid o) = (1 - \varepsilon) \pi(a \mid o) + \varepsilon \pi^g(a \mid o)$, where  $\varepsilon$ is an input parameter where $0 \leq \varepsilon \leq 1$. For our analysis, we assume that before every policy improvement update, we computed the actual q-values of the current policy.  We refer to this algorithm as \emph{the \citet{jaakkola1995reinforcement} algorithm with perfect q-value estimates}. For simplicity, we assume the algorithm breaks ties between write actions in favour of the higher valued binary number (\textit{i.e.} 1 over 0 or 11 over 10). Under this scenario, we will see that a sufficiently expressive external memory module is not enough to guarantee convergence to a policy that is as good as the optimal history-based policy. This is shown in Theorem \ref{theorem:converge_to_local}

We first prove that this algorithm fails to converge to the optimal policy from the starting state in the case of B1, and then show how the argument can be extended to any B$k$. Let us now introduce some notation for the case of $k=1$. We will use $p^m_{aw}$ as short for $\pi(\tuple{a, w} \mid \tuple{o, m})$ and $Q^m_{aw}$ as short for $\mathbb{E}_\pi[R_0 + \gamma R_1 + ... \mid m, a, o, w]$. Note that we omit the environment observation $o$ since there is only one observation in this problem and use a discount factor of 1. We also  define $p^m_a$ as the probability given memory state $m$ of action $a$ marginalized over memory-write actions (i.e. $p^m_{a0} + p^m_{a1}$). 

\begin{lemma}\label{lemma:q_values}
	Let $b_i = \frac{p^0_{i0}}{1 + \sum_{j} p^0_{j0}}$ and $t_i = \frac{p^0_{i1}}{\sum_{j} p^0_{j1}}$. The q-values for a given policy $\pi$ for the $4$-action recall task with B1 memory are given by: 
	\begin{align*}
	    Q_{aw}^0 & = \sum_{0 \leq i \leq 3} r_{ia} b_i + \left(1 - \sum_{0 \leq i \leq 3} b_i \right) \sum_{0 \leq i \leq 3} p^w_i r_{ai} \\
	    Q_{aw}^1 & = \sum_{0 \leq i \leq 3} t_i r_{ia}
	\end{align*}
\end{lemma}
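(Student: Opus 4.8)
The plan is to prove both formulas by unfolding the POMDP decomposition $q_\pi(o,a) = \sum_s P_\pi(s\mid o)\, q_\pi(s,a)$ from Section~\ref{sec:pol_improvement}, where here the ``observation'' is the augmented one $\tuple{o,m}$. First I would fix the structure of the task: the episode lasts two steps, the memory deterministically starts at $0$, the only immediate reward is the final one (I write $r_{ij}$ for the reward when the first action is $i$ and the second is $j$), and $\gamma = 1$, so the return always equals that final reward. The key observation is that the augmented observation $\tuple{o,m}$ does not pin down the time step. Observation $\tuple{o,0}$ arises both at step $0$ (the deterministic start) and at step $1$ whenever the agent took some first action $i$ and wrote $0$; observation $\tuple{o,1}$ arises only at step $1$, after a first action $i$ followed by a write of $1$. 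I would therefore take the underlying states relevant to each augmented observation to be exactly these distinguishable histories.

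Next I would compute the belief weights $P_\pi(s\mid o)$ as normalized per-episode visitation frequencies. For $m=0$: the start is visited exactly once per episode (weight $1$), while the step-$1$ state with first action $i$ is visited with probability $p^0_{i0}$. Normalizing by the total $1 + \sum_j p^0_{j0}$ gives $P_\pi(\text{first action }i \mid m=0) = b_i$ and $P_\pi(\text{start}\mid m=0) = 1 - \sum_i b_i$; the extra $1$ in the denominator of $b_i$ is precisely the unit weight of the always-visited start. For $m=1$: only step-$1$ states occur, with weights $p^0_{i1}$, so normalizing by $\sum_j p^0_{j1}$ yields $t_i$ directly (and no start term appears, since memory $1$ never occurs at step $0$).

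Then I would evaluate the conditional q-values $q_\pi(s,\tuple{a,w})$. From the start, executing $\tuple{a,w}$ gives reward $0$, sets the memory to $w$, and lands at step $1$ with observation $\tuple{o,w}$; following $\pi$ there selects second action $i$ with probability $p^w_i$ (marginalizing over the now-irrelevant final write), contributing $\sum_i p^w_i\, r_{ai}$. From a step-$1$ state with first action $i$, executing $a$ simply collects $r_{ia}$ and ends the episode, independent of $w$. Substituting into $Q^m_{aw} = \sum_s P_\pi(s\mid m)\, q_\pi(s,\tuple{a,w})$ then reproduces $Q^0_{aw} = \sum_i r_{ia} b_i + \bigl(1 - \sum_i b_i\bigr)\sum_i p^w_i r_{ai}$ and $Q^1_{aw} = \sum_i t_i r_{ia}$. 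I expect the belief-weight step to be the main obstacle: one must recognize that memory state $0$ is genuinely ambiguous across two time steps, count its visitations correctly, and in particular see why the deterministic start contributes the $+1$ that separates $b_i$ from the raw write probabilities $p^0_{i0}$. The remaining steps are bookkeeping enabled by $\gamma=1$ and the two-step horizon.
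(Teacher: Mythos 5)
Your proposal is correct and follows essentially the same route as the paper's proof: decompose $Q^m_{aw}$ over the hidden states (the initial state and the four post-first-action states) weighted by $\mathbb{P}_\pi[S \mid M]$, compute those weights as visitation-frequency ratios, and evaluate the conditional returns exactly as you do. The only cosmetic difference is that the paper carries explicit $1/2$ factors (half of all encountered states are the initial one) which cancel in the ratio, whereas you normalize per-episode visit counts directly---the same computation.
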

\begin{proof}
    We consider the cases of $m=0$ and $m=1$ separately. Let $s^*$ denote the initial state of the POMDP, and $s_0, s_1, s_2, s_3$ as the states reached after performing actions $0, 1, 2, 3$ respectively, in the initial state.
	
	\underline{Case $m=0$:}
	
	\begin{align*}
		Q_{aw}^0 &= \sum_{0 \leq i \leq 3} r_{ia} \mathbb{P}_\pi[S=s_i | M=0] + \mathbb{P}_\pi[S=s^* | M=0] \mathbb{E}_\pi[R_0 + \gamma R_1 + ...| M=w, S=s_a] \\
		&= \sum_{0 \leq i \leq 3} r_{ia}\mathbb{P}_\pi [S=s_i | M=0] + \mathbb{P}_\pi [S=s^* | M=0] \sum_{0 \leq i \leq 3} p^w_i r_{ai}
	\end{align*}
	Now, let us turn to $\mathbb{P}_\pi[S=s_i | M=0]$:
	\begin{align*}
	    \mathbb{P}_\pi[S=s^* | M=0] & = 1 - \sum_{0 \leq i \leq 3} \mathbb{P}_\pi[S=s_i | M=0] \\
	\end{align*}
	For each $\mathbb{P}_\pi[S=s_i | M=0]$, we now get the following:
	\begin{align*}
	    \mathbb{P}_\pi[S=s_i | M=0] = \frac{\mathbb{P}_\pi[S=s_i, M=0]}{\mathbb{P}_\pi[M=0]} = \frac{\frac{p^0_{i0}}{2}}{\frac{1}{2} + \frac{1}{2} \sum_{j \in \{0,  1, 2, 3\}} p^0_{j0}} = b_i
	\end{align*}
	We note that the $1/2$ factors come from the fact that if we ran an infinite number of episodes of this task, half the encountered states would be $s^*$ (since every episode starts there), and half would be the result of taking an action in $s^*$.
	
	By substituting this into the expression above, we see the statement holds in this case.
	
	\underline{Case $m = 1$:}
	
	$$ \mathbb{E}_\pi [R_0 + \gamma R_1 + ... | M=1, A=a, W=w] = \sum_{0 \leq i \leq 3} r_{ia}\mathbb{P}_\pi[S=s_i | M=1] $$
	
	Again, when we compute $\mathbb{P}_\pi[S=s_i | M=1]$, we get the following: 
	
	$$\mathbb{P}_\pi[S=s_i | M=1] = \frac{\mathbb{P}_\pi[M=1, S=s_i]}{\mathbb{P}_\pi[M=1]} = \frac{ \frac{1}{2} p^0_{i1}}{\frac{1}{2}\sum_{0 \leq j \leq 3} p^0_{j1}} = t_i$$
	
	The $1/2$ factors emerge for the same reason as discussed above. Again, with substitution, this recovers the desired q-value expression.
\end{proof}

Let us now show that the suboptimal policy that selects action 1 when the memory state is 0 and action 3 when the memory state is 1 is a local optimum:

\begin{lemma}\label{lemma:local_optimum}
	Let $\pi_l$ be the policy for the $4$-action recall task with B1 memory with $p^0_{11} = 1$ and $p^1_{31} = 1$. Then the policy that is greedy on the q-values of $\pi_l$ is $\pi_1$ itself.
\end{lemma}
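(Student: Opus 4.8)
The plan is to apply Lemma~\ref{lemma:q_values} directly: compute the coefficients it introduces for the specific policy $\pi_l$, substitute them into its two q-value formulas, and verify that the resulting greedy choices coincide with $\pi_l$ in both memory states. Since $\pi_l$ is deterministic, it suffices to show that the (tie-broken) $\argmax$ over $\tuple{a,w}$ of the q-values recovers $\pi_l$'s action in each memory state.

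First I would read off the coefficients. Because $\pi_l$ plays $\tuple{1,1}$ deterministically whenever the memory is $0$, every write-$0$ probability vanishes, $p^0_{i0}=0$, so $b_i = \frac{p^0_{i0}}{1+\sum_j p^0_{j0}} = 0$ for all $i$. The only nonzero write-$1$ probability in memory state $0$ is $p^0_{11}=1$, giving $t_1 = 1$ and $t_i = 0$ for $i\neq 1$. I would also record the marginal action distributions: $p^0_1 = 1$ and $p^1_3 = 1$, with all other entries zero.

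Next I would substitute. Because all $b_i = 0$, the memory-$0$ q-value collapses to $Q^0_{aw} = \sum_i p^w_i r_{ai}$, the expected return from playing $a$ now while the memory-state-$w$ policy selects the second action. Using $p^0_1 = 1$ and $p^1_3 = 1$ this yields $Q^0_{a0} = r_{a1}$ and $Q^0_{a1} = r_{a3}$; scanning the reward table in Figure~\ref{fig:suboptimal-example}, the unique maximizer over $\tuple{a,w}$ is $\tuple{1,1}$ with value $r_{13} = 0.75$. For memory state $1$, $Q^1_{aw} = \sum_i t_i r_{ia} = r_{1a}$, which does not depend on $w$; its maximizer over actions is $a=3$ with value $r_{13}=0.75$, and the stated tie-breaking rule (preferring the higher-valued write action) selects $\tuple{3,1}$. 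Since the greedy policy therefore plays $\tuple{1,1}$ in memory state $0$ and $\tuple{3,1}$ in memory state $1$---exactly the deterministic choices of $\pi_l$---we conclude that the greedy policy is $\pi_l$ itself.

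There is no deep obstacle here; the statement reduces to a finite evaluation once Lemma~\ref{lemma:q_values} is in hand. The one delicate bookkeeping point is the coupling in $Q^0_{aw}$ between the first-step write action $w$ and the second-step action distribution $p^w$: writing $w$ now sets the memory state---hence the policy governing---the next step, so $Q^0_{aw}$ must be evaluated against $p^w$ rather than $p^0$. Keeping this straight, together with resolving the tie in memory state $1$ via the stated convention, is all the argument really requires.
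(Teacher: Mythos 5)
Your proposal is correct and takes essentially the same route as the paper's own proof: both apply Lemma~\ref{lemma:q_values}, note that $b_i = 0$ and $t_1 = 1$ under $\pi_l$, reduce the q-values to $Q^0_{aw} = \sum_i p^w_i r_{ai}$ and $Q^1_{aw} = r_{1a}$, and then read off the greedy maximizer $\tuple{1,1}$ in memory state $0$ and $\tuple{3,1}$ in memory state $1$ (using the write-action tiebreak), recovering $\pi_l$.
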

\begin{proof}
    Consider the q-values of $\pi_l$ given in Lemma \ref{lemma:q_values}.
	Since $b_i = 0$, $Q_{aw}^0$ simplifies to $\sum_{0 \leq i \leq 3} p^w_i r_{ai}$. Since $p^1_3 = 1$ and $p^1_i = 0$ for $i \neq 3$, this means that $Q^0_{11} = 1 \cdot r_{13} = 3/4$, and $Q^0_{a1} = 1 \cdot r_{a3} = 1/2$. Since $p^0_1 = 1$ and $p^0_i = 0$ for $i \neq 1$, we also have that $Q^0_{a0} = 1/2$ for all $a$.
	
	Now notice that $t_1 = 1$, and $t_i = 0$ for $i \neq 1$. As such $Q_{aw}^1$ simplifies to $r_{1a}$.
	
	Let us now consider the greedy policy $\pi^g$ on these q-values. In the case that the memory state is $0$, the greedy action over $Q^0_{11} = 3/4$ and $Q^0_{aw} = 1/2$ for any $a$ and $w$ where $a \neq 1$ or $w \neq 1$ is clearly to set $p^0_{11} = 1$. In the case that the memory state is $1$, the greedy action is the one that maximizes $r_{1a}$ which is $a = 3$ as that gives $3/4$. Therefore, $\pi^g$ will set $p^1_{31} = 1$ since it tiebreaks in favour of a higher memory write actions. Since $\pi^g$ is clearly the same policy as $\pi_l$, the statement is true.
\end{proof}

We will now show that the \citet{jaakkola1995reinforcement} algorithm with perfect q-value estimation will converge to this local optimum. We begin with the following lemma, which will used for the inductive step in the full result.

\begin{lemma}\label{lemma:inductive_step}
    Let $\varepsilon$ be a constant such that $0 \le \varepsilon \le 1$.
    Suppose that $\pi$ is a policy for the $4$-action recall task with B1 memory, such that the policy  that is greedy on the q-values of $\pi$ is $\pi_l$. Let $\tilde{\pi}$ be defined such that for any $\tilde{\pi} = (1 - \varepsilon) \pi + \varepsilon \pi_l$. Then the policy that is greedy on the q-values of $\tilde{\pi}$ is also $\pi_l$.
\end{lemma}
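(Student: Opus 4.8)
The plan is to handle each memory state $m \in \{0,1\}$ separately and, in each, reduce the greedy comparison for $\tilde\pi$ to comparing numerators that share one common positive denominator. I would then write each such numerator as a nonnegative combination of two pieces: the q-value of $\pi$ (which, by hypothesis, is maximized at $\pi_l$'s action because the greedy policy on $\pi$ is $\pi_l$) and a pure second-stage reward term (which I would check, from the reward table, is \emph{strictly} maximized at $\pi_l$'s action). Since both pieces then share the same maximizer, so does their nonnegative combination, which forces the greedy policy on $\tilde\pi$ to be $\pi_l$ as well.

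Concretely, I would first record how the quantities in Lemma \ref{lemma:q_values} transform under $\tilde\pi = (1-\varepsilon)\pi + \varepsilon\pi_l$. Because $\pi_l$ is deterministic and puts all of its mass on write-$1$ actions ($\tuple{1,1}$ at $m=0$ and $\tuple{3,1}$ at $m=1$), every write-$0$ probability is merely scaled by $(1-\varepsilon)$; writing $c \defeq \sum_j p^0_{j0}$, this gives $\tilde b_i = (1-\varepsilon)p^0_{i0}/(1+(1-\varepsilon)c)$ and $1-\sum_i \tilde b_i = 1/(1+(1-\varepsilon)c)$, while the action marginals shift as $\tilde p^0_1 = (1-\varepsilon)p^0_1 + \varepsilon$ (and $(1-\varepsilon)p^0_i$ otherwise), analogously toward action $3$ at $m=1$. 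Substituting into the $m=0$ formula, the factor $1/(1+(1-\varepsilon)c)$ is common to every pair $\tuple{a,w}$, so $\tilde Q^0_{aw} = N_{aw}/(1+(1-\varepsilon)c)$, and using the identity $\sum_i p^0_{i0}r_{ia} + \sum_i p^w_i r_{ai} = (1+c)\,Q^0_{aw}(\pi)$ (the $m=0$ formula for $\pi$ cleared of its denominator) the numerator collapses to
\begin{equation*}
N_{aw} = (1-\varepsilon)(1+c)\,Q^0_{aw}(\pi) + \varepsilon\, r_{a\sigma(w)},
\end{equation*}
where $\sigma(0)=1$ and $\sigma(1)=3$ index the second-stage reward that $\pi_l$'s write action leads to. The same computation at $m=1$ yields $\tilde Q^1_a = \big((1-\varepsilon)D\,Q^1_a(\pi) + \varepsilon\, r_{1a}\big)/\big((1-\varepsilon)D + \varepsilon\big)$ with $D \defeq \sum_j p^0_{j1}$, again a common positive denominator.

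It then remains to inspect the two ingredients. The coefficients $(1-\varepsilon)(1+c)$ and $\varepsilon$ are nonnegative, and $Q^0_{aw}(\pi)$ is maximized at $\tuple{1,1}$ by the hypothesis that $\pi_l$ is greedy on $\pi$; reading the reward table, $r_{a\sigma(w)}$ equals $0.75$ at $\tuple{1,1}$ and equals $0.5$ for every $w=0$ entry and every $w=1$ entry with $a \ne 1$, so it is \emph{strictly} maximized at $\tuple{1,1}$. Hence for $\varepsilon>0$ the numerator $N_{aw}$ is strictly maximized at $\tuple{1,1}$, making it the unique greedy action at $m=0$; for $\varepsilon = 0$ the claim is just the hypothesis. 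The identical argument at $m=1$ (where $r_{1a}$ is strictly maximized at $a=3$, and the tie-break toward the higher binary number fixes the write action to $1$) gives greedy action $\tuple{3,1}$, so the greedy policy on $\tilde\pi$ is exactly $\pi_l$. I expect the only real obstacle to be the bookkeeping that produces a single common denominator per memory state: the $\tilde b_i$ and $\tilde t_i$ are nonlinear (ratio) functions of $\varepsilon$, and it is precisely their shared normalization cancelling out that converts a nonlinear-in-policy object into the nonnegative combination on which the ``shared argmax'' argument rests.
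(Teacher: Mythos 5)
Your proposal is correct and takes essentially the same route as the paper's proof: the same split into memory states $m=0$ and $m=1$, the same algebraic reduction of $\tilde{Q}$ to a nonnegative combination of $Q_\pi$ (maximized at $\pi_l$'s action by hypothesis) and a pure second-stage reward term (maximized at $\pi_l$'s action by the reward table), and the same tie-breaking step for the write action at $m=1$. Your clearing of the common denominator is just a cosmetic repackaging of the paper's factor $\alpha$, and your explicit handling of $\varepsilon=0$ and of strictness is a minor tightening rather than a different argument.
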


\begin{proof}
     In this proof, we will use  $\tilde{Q}^m_{aw}$ for the q-values for $\tilde{\pi}$. We will similarly define $\tilde{p}^m_{aw}$, $\tilde{b_i}$, and $\tilde{t_i}$ as $p^m_{aw}$, $b_i$, and $t_i$ were defined above. Notice that $\tilde{p}^0_{11} = (1 - \varepsilon)p^0_{11} + \varepsilon$, $\tilde{p}^1_{31} = (1 - \varepsilon) p^1_{31} + \varepsilon$, and $\tilde{p}^m_{aw} = (1 - \varepsilon) p^m_{aw}$, otherwise. We will also express $\tilde{p}^w_{11} = (1 - \varepsilon)p^w_{11} + \varepsilon \mathbb{I}_{w=0}$ where $\mathbb{I}$ is the indicator function. 
     We now consider the q-values in the two cases of $m=0$ and $m=1$. 
    
    \underline{Case $m=0$:} 
    
    We begin by expressing $\tilde{b_i}$ in terms of $b_i$.
    \begin{align}
        \tilde{b_i} = \frac{\tilde{p}^0_{i0}}{1 + \sum_{j} \tilde{p}^0_{j0}} 
                    = \frac{(1 - \varepsilon) p^0_{i0}}{1 + (1 - \varepsilon)  \sum_{j} p^0_{j0}} \frac{1 + \sum_{j}  p^0_{j0}}{1 + \sum_{j}  p^0_{j0}}
                    = \frac{(1 - \varepsilon) (1 + \sum_{j}  p^0_{j0})}{1 + (1 - \varepsilon) \sum_{j} p^0_{j0}} b_i
                    = (1 - \varepsilon) \alpha b_i
    \end{align}
    where $\alpha = (1 + \sum_{j}  p^0_{j0})/(1 + (1 - \varepsilon) \sum_{j} p^0_{j0})$. In addition, we have the following expression:

    \begin{align}\label{exp:one_minus_b}
        1 - \sum_{0 \le i \le 3} b_i = \frac{1 + \sum_{0 \le j \le 3} p^0_{j0}}{1 + \sum_{0 \le j \le 3} p^0_{j0}} - \sum_{0 \le i \le 3} \frac{p^0_{i0}}{1 + \sum_{0 \le j \le 3} p^0_{j0}} = \frac{1}{1 + \sum_{0 \le j \le 3} p^0_{j0}}
    \end{align}
    
    Clearly, a similar expression exists for $1 - \sum_i \tilde{b_i}$. We will now use these expressions in the following derivation, which begins with the q-vale expression from Lemma \ref{lemma:q_values}:
    {%
    \begingroup%
    \allowdisplaybreaks
    \begin{align*}
        \tilde{Q}^0_{aw} &= \sum_{0 \leq i \leq 3} r_{ia} \tilde{b_i} +
        \left(1 - \sum_{0 \leq i \leq 3} \tilde{b_i} \right) \sum_{0 \leq i \leq 3} \tilde{p}^w_i r_{ai}\\
        &=\sum_{0 \leq i \leq 3} r_{ia} \tilde{b_i} +
        \left(\frac{1}{1 + \sum_j \tilde{p}^0_{j0}}\right) \sum_{0 \leq i \leq 3} \tilde{p}^w_i r_{ai} \\
        &= \sum_{0 \leq i \leq 3} r_{ia} \tilde{b_i} + \left(\frac{1}{1 + (1 - \varepsilon) \sum_j p^0_{j0}}\right) \sum_{0 \leq i \leq 3} \tilde{p}^w_i r_{ai} \\
        &= \sum_{0 \leq i \leq 3} r_{ia} \tilde{b_i} + \alpha \left(\frac{1}{1 + \sum_j p^0_{j0}}\right) \sum_{0 \leq i \leq 3} \tilde{p}^w_i r_{ai} \\
        &= \sum_{0 \leq i \leq 3} r_{ia} (1 - \varepsilon) \alpha b_i + \alpha (1 - \sum_{0 \leq i \leq 3} b_i) \sum_{0 \leq i \leq 3} \tilde{p}^w_i r_{ai}\\
    \end{align*}%
    \endgroup%
    }%
    
    Recall that $\tilde{p}^w_i = (\tilde{p}^w_{i0} + \tilde{p}^w_{i1})$.
    Thus, we can continue the derivation as follows:
    {%
    \begingroup%
    \allowdisplaybreaks
    \begin{align*}
        \tilde{Q}^0_{aw} &= \sum_{0 \leq i \leq 3} r_{ia} (1 - \varepsilon) \alpha b_i + \alpha (1 - \sum_{0 \leq i \leq 3} b_i) \sum_{0 \leq i \leq 3} (\tilde{p}^w_{i0} + \tilde{p}^w_{i1}) r_{ai} \\
        &= (1 - \varepsilon) \alpha \sum_{0 \leq i \leq 3} r_{ia}  b_i \\
        & ~~~~~~~~~~~~~+ \alpha (1 - \sum_{0 \leq i \leq 3} b_i) \left(\varepsilon \mathbb{I}_{w=0}r_{a1} + \varepsilon \mathbb{I}_{w=1}r_{a3} + (1 - \varepsilon)\sum_{0 \leq i \leq 3} (\tilde{p}^w_{i0} + \tilde{p}^w_{i1}) r_{ai}\right) \\
        &= (1 - \varepsilon) \alpha \left(\sum_{0 \leq i \leq 3} r_{ia} b_i + (1 - \sum_{0 \leq i \leq 3} b_i) \sum_{0 \leq i \leq 3} p^w_i r_{ai} \right) + \\
        & ~~~~~~~~~~~~~+\varepsilon \alpha (1 - \sum_{0 \leq i \leq 3} b_i) \left(\mathbb{I}_{w=0}r_{a1} + \mathbb{I}_{w=1}r_{a3}\right) \\
    	&= \alpha \left[(1 - \varepsilon)  Q^0_{aw} + \varepsilon (1 - \sum_{0 \leq i \leq 3} b_i) \left(\mathbb{I}_{w=0}r_{a1} + \mathbb{I}_{w=1}r_{a3}\right) \right]
    \end{align*}%
    \endgroup%
    }%

	Notice that the above is merely a linear combination (with non-negative coefficients) of $Q^0_{aw}$ and $\mathbb{I}_{w=0}r_{a1} + \mathbb{I}_{w=1}r_{a3}$. The remaining scalars do not depend on $a$ or $w$. Now $Q^0_{aw}$ has its maximum when $a=1$ and $w=1$ because this was the greedy action over the q-values of $\pi$. We also see that $\mathbb{I}_{w=0}r_{a1} + \mathbb{I}_{w=1}r_{a3}$ has its maximum when $a=1$ and $w=1$ by the way the reward function is set up. Thus, the greedy policy over the q-values of $\tilde{\pi}$ in memory state $0$, takes $a=w=1$, which is the same as $\pi_l$.
	
	\underline{Case $m=1$:} 
	
	By Lemma \ref{lemma:q_values}, we have the following:
	\begin{align*}
	    \tilde{Q}^1_{aw} &= \sum_{0 \leq i \leq 3} \frac{\tilde{p}^0_{i1}}{\sum_{j} \tilde{p}^0_{j1}} r_{ia} \\
	    &= \frac{(1 - \varepsilon) \left( \sum_{0 \leq i \leq 3}  p^0_{i1} r_{ia} \right)+ \varepsilon r_{1a}}{(1 - \varepsilon) \sum_{j} p^0_{j1} + \varepsilon}
	\end{align*}

    Recall that $Q^1_{aw}$ had the maximal value for $a = 3$ and $w=1$, since these correspond to the greedy action in $\pi_l$. Since the denominator in $t_i$ is a constant, this means that the numerator, $\sum_{0 \leq i \leq 3}  p^0_{i1} r_{ia}$, takes on its maximum value when $a=3$ and $w=1$. Further, $r_{1a}$ is maximized for $a=3$. Thus, the numerator in the expression above is maximized when $a = 3$ and $w=1$. Since the denominator is a constant, $\tilde{Q}^1_{aw}$ is also maximized with these values. So the greedy policy on $\tilde{\pi}$, when $m=1$, also agrees with $\pi_l$. 
    
    Since these are all the possible memory states in B1, the greedy policy for $\tilde{\pi}$ must also be $\pi_l$.
\end{proof}

We can now use this lemma as the inductive step to prove that the \citet{jaakkola1995reinforcement} algorithm with perfect q-value estimates will converge to this suboptimal policy on the $4$-action recall task when using B1 memory.

\begin{lemma}\label{lemma:jaakola_subopt}
    The \citet{jaakkola1995reinforcement} algorithm with perfect q-value estimates will converge to a suboptimal policy on the $4$-action recall task with B1 external memory, when initialized with a uniform policy.
\end{lemma}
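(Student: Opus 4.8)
The plan is to run the three preceding lemmas as the steps of an induction on the iteration count. Write $\pi_0$ for the uniform policy and let $\pi_{n+1} = (1-\varepsilon)\pi_n + \varepsilon \pi^g_n$, where $\pi^g_n$ denotes the policy greedy on the exact q-values of $\pi_n$. The goal is to show that the target $\pi^g_n$ equals the suboptimal local optimum $\pi_l$ of Lemma~\ref{lemma:local_optimum} at \emph{every} iteration, so that the entire trajectory of policies contracts onto $\pi_l$.

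First I would establish the base case: that the policy greedy on the q-values of $\pi_0$ is $\pi_l$. Here I would substitute the uniform distribution $p^m_{aw} = 1/8$ into the closed forms of Lemma~\ref{lemma:q_values}. This makes $b_i$, $t_i$, and the marginals $p^w_i$ all constant in $i$, so the eight values $Q^0_{aw}$ and the eight values $Q^1_{aw}$ reduce to simple linear combinations of the row sums and column sums of the reward table in Figure~\ref{fig:suboptimal-example}. Comparing these values, and applying the stated tie-breaking rule in favour of the larger write action, should show that the greedy action is $\tuple{1,1}$ in memory state $0$ and $\tuple{3,1}$ in memory state $1$, which is exactly $\pi_l$.

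With the base case in hand, the induction is immediate. Since $\pi^g_0 = \pi_l$, the first update gives $\pi_1 = (1-\varepsilon)\pi_0 + \varepsilon \pi_l$, which is precisely the hypothesis of Lemma~\ref{lemma:inductive_step}; that lemma then yields $\pi^g_1 = \pi_l$. Feeding this back, every iterate satisfies $\pi^g_n = \pi_l$ and hence $\pi_{n+1} = (1-\varepsilon)\pi_n + \varepsilon \pi_l$. Solving this affine recursion gives $\pi_n - \pi_l = (1-\varepsilon)^n(\pi_0 - \pi_l)$, so for any $\varepsilon \in (0,1]$ the iterates converge to $\pi_l$. Finally, I would note that $\pi_l$ deterministically produces the action sequence $(1,3)$, earning reward $r_{13} = 0.75$, which is strictly less than the optimum reward $1$ attained by the sequence $(0,2)$; hence the algorithm converges to a suboptimal policy.

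The routine-but-load-bearing part is the base-case arithmetic: the whole argument hinges on the uniform policy's greedy update already pointing at $\pi_l$ rather than toward the optimal first action $0$, and this must be verified by explicitly evaluating and ranking the q-values from Lemma~\ref{lemma:q_values}. Everything after that is a mechanical application of Lemma~\ref{lemma:inductive_step} together with the convergence of a geometric contraction, so no further POMDP-specific reasoning is required.
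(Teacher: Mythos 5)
Your proposal is correct and follows essentially the same route as the paper's proof: a base case that plugs the uniform policy into the closed forms of Lemma~\ref{lemma:q_values} (your observation that $b_i$, $t_i$, and $p^w_i$ are constant in $i$ checks out, yielding greedy actions $\tuple{1,1}$ and $\tuple{3,1}$ under the stated tie-breaking), followed by induction via Lemma~\ref{lemma:inductive_step}. Your explicit solution of the affine recursion, $\pi_n - \pi_l = (1-\varepsilon)^n(\pi_0 - \pi_l)$, is a slightly more rigorous justification of the limiting behaviour than the paper's informal ``will get closer to $\pi_l$, which it must reach in the limit,'' but it is a refinement of the same argument rather than a different one.
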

\begin{proof}
    Let $\varepsilon$ be the fixed value that the policy is moved towards the greedy policy on every step, where $0 \le \varepsilon \leq 1$. The proof is by induction. We will show that on every step of the algorithm, $\pi_l$ will be the policy that is greedy on the current policy. Thus, on every step, the current policy will get closer to $\pi_l$.
    
    \underline{Base Case:} We begin at the uniform policy $\pi_u$. Putting the values in to the expressions in Lemma \ref{lemma:q_values}, we see that the q-values for this policy are
    
    \begin{center}%
    	\begin{tabular}{| c || c r |}
    	    \hline
    		$(a, w)$ & $m=0$ & $m=1$ \\
    		\hline
    		$(0, 0)$ & $-0.916$ & $-1.25$ \\
    		$(0, 1)$ & $-0.916$ & $-1.25$ \\
    		$(1, 0)$ & $0.2916$ & $0.5$ \\
    		$(1, 1)$ & $0.2916$ & $0.5$ \\
    		$(2, 0)$ & $-1.4583$ & $-2.375$ \\ 
    		$(2, 1)$ & $-1.4583$ & $-2.375$ \\
    		$(3, 0)$ & $-0.47916$ & $0.5625$ \\
    		$(3, 1)$ & $-0.47916$ & $0.5625$ \\
    		\hline
    	\end{tabular}%
    \end{center}%

    Given the tiebreaking rules defined, we see that the greedy actions are the same as those taken by $\pi_l$. 
    
    \underline{Inductive Step:} Assume that the policy that is greedy on the q-values of the current policy $\pi$ is equivalent to $\pi_l$. Thus, the next policy is $\tilde{\pi} = (1-\varepsilon) \pi + \epsilon \pi_l$. By Lemma \ref{lemma:inductive_step}, the policy that is greedy on the q-values of $\tilde{\pi}$ is also $\pi_l$.
    
    Thus, from initialization of $\pi_u$, we see that the on every policy improvement step, the policy will get closer to $\pi_l$, which it much reach in the limit. Since this policy is not as good as the best history-based policy, the statement holds.
\end{proof}

One may be tempted to think that adding more memory would solve this problem, i.e., it will allow the agent to learn a policy that is as good as the optimal history-based policy for mePOMDP $\mathcal{P}$, since B$k$ is sufficiently expressive for $k\geq 2$. However, the policy improvement mechanism cannot guarantee that this is learned. We see this below.

\begin{lemma}\label{lemma:memory_invariance}
	The \citet{jaakkola1995reinforcement} algorithm with perfect q-value estimates will converge to a suboptimal policy on the $4$-action recall task with B$k$ external memory, when initialized with a uniform policy, for any $k \ge 1$.
\end{lemma}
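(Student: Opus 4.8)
The plan is to reduce the B$k$ dynamics to the already-analyzed B1 case (Lemmas~\ref{lemma:q_values}--\ref{lemma:jaakola_subopt}) by exploiting a symmetry among the non-initial memory states. The key structural fact is that the task has only two steps, so the initial memory $0^k$ is the only memory state ever observed at the first step (where the state is $s^*$), while every other memory state $m \neq 0^k$ is observed only at the second step and encodes solely the write action $w_0 = m$. I would track the invariant, maintained throughout the run of \citet{jaakkola1995reinforcement}'s algorithm started from the uniform policy $\pi_u$, that $\pi(\cdot \mid m)$ is identical for all $m \neq 0^k$ and that the only elevated action at $0^k$ is $\tuple{1, 1^k}$. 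This invariant makes all non-$0^k$ write actions interchangeable (they yield the same continuation), which collapses the first-step decision to ``write $0^k$ or not,'' reproducing the two-memory-state structure of B1.

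First I would establish the base case by computing the greedy policy at $\pi_u$ via the same occupancy argument as in Lemma~\ref{lemma:q_values}. For any $m \neq 0^k$ the belief is $\mathbb{P}_{\pi_u}[S = s_i \mid m] = 1/4$, so the second-step q-value is $\tfrac14\sum_i r_{ia}$, maximized at $a=3$ (value $0.5625$). For $0^k$ the q-value is $(1 - 4b)\,\tfrac14\sum_{i} r_{ai} + b\sum_i r_{ia}$ with the common occupancy $b = \tfrac{1/(4\cdot 2^k)}{1 + 1/2^k}$, which is maximized at $a=1$ for every $k \ge 1$ because the condition $a{=}1 \succ a{=}3$ reads $0.1875 + 1.25b > -1 + 6.25b$, i.e. $b < 0.2375$, and $b \le 1/12 < 0.2375$. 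Since under $\pi_u$ both the continuation and occupancy terms are independent of the write action, all writes tie and the tiebreak picks $1^k$. Hence the greedy policy from $\pi_u$ is $\pi_l^{(k)}$: play $\tuple{1,1^k}$ at $0^k$ and $\tuple{3,1^k}$ at every $m \neq 0^k$, the direct analogue of $\pi_l$.

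Next I would prove the inductive step, mirroring Lemma~\ref{lemma:inductive_step}. Assuming the greedy policy of the current $\pi$ is $\pi_l^{(k)}$, the update elevates only $\tuple{1,1^k}$ at $0^k$ and only $\tuple{3,1^k}$ (identically) at each $m \neq 0^k$, so $\pi(\cdot \mid m)$ stays common across all $m \neq 0^k$ and the invariant is preserved. The only place the symmetry is genuinely broken is the belief at $1^k$: because the tiebreak keeps pushing mass onto $w_0 = 1^k$, one gets $\mathbb{P}_\pi[S=s_1 \mid 1^k] = \lambda \ge 1/4$ rather than $1/4$, while the other non-$0^k$ states retain belief $1/4$. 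I would handle this by writing $q_\pi(1^k, \tuple{a,\cdot}) = \lambda r_{1a} + \tfrac{1-\lambda}{3}\sum_{i\neq 1} r_{ia}$ and checking directly that $a=3$ remains the maximizer for every $\lambda \in [1/4,1]$: it gives $0.25\lambda + 0.5$, which dominates $a{=}1$ (value $0.5$), $a{=}0$ (value $-\tfrac{5(1-\lambda)}{3}\le 0$), and $a{=}2$ (value $2.5\lambda-3$) throughout. With the greedy actions at all non-$0^k$ states thus pinned to $\tuple{3,1^k}$, the remaining algebra at $0^k$ factors as a non-negative linear combination of $Q^0_{aw}$ and an $\mathbb{I}_{w=0}r_{a1}+\mathbb{I}_{w=1}r_{a3}$ term exactly as in Lemma~\ref{lemma:inductive_step}, keeping the greedy action at $0^k$ equal to $\tuple{1,1^k}$.

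Finally I would assemble these into the convergence claim as in Lemma~\ref{lemma:jaakola_subopt}: starting from $\pi_u$ (base case), the greedy target is $\pi_l^{(k)}$ at every iteration (inductive step), so the iterates converge to $\pi_l^{(k)}$, which realizes the trajectory $\tuple{1,1^k}$ then $\tuple{3,\cdot}$ and hence the return $r_{13} = 0.75 < 1 = r_{02}$, the optimal history-based value; since B$k$ is sufficiently expressive for $k \ge 2$, this proves the memory is wasted. I expect the main obstacle to be precisely the bookkeeping around the broken symmetry at $1^k$: one must confirm that the preferential writing of $1^k$, and the resulting non-uniform belief there, never flips the greedy action away from $a=3$, and that the rarely visited states $m \neq 0^k,1^k$ cause no trouble because the full-support iterates keep $\mathbb{P}_\pi[S=s_i\mid m]=1/4$ well-defined for them at every step.
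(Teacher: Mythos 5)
Your proposal is correct and takes essentially the same route as the paper's proof: an induction over policy-improvement steps showing the greedy target is always the suboptimal ``play $1$ then $3$, write all-ones'' policy, with the base case verified at the uniform policy and the inductive step split into memory state $0^k$ (handled by the same non-negative linear decomposition as in Lemma~\ref{lemma:inductive_step}) and the non-initial memory states (where only $1^k$'s belief drifts toward $s_1$ while the rest stay uniform, leaving action $3$ greedy). Your direct general-$k$ occupancy computation ($b \le 1/12 < 0.2375$) and explicit $\lambda$-parametrization of the belief at $1^k$ are somewhat more self-contained than the paper's scaling reduction to the $k=1$ table, but the underlying argument is the same.
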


\begin{proof}
	We generalize the argument in Lemmas \ref{lemma:inductive_step} and \ref{lemma:jaakola_subopt} to B$k$ memory. This gives $2^k$ possible memory states $0, \ldots, 2^k-1$. We note that in this case, due to tiebreak rules, the locally optimal, globally suboptimal policy $\pi_l$ is given by $p^0_{1(2^k-1)} = p^m_{3(2^k-1)} = 1$ (where $m \ge 1$), an $p^m_{aw} = 0$ otherwise. 
	Notice that analogous formulas as given in Lemma \ref{lemma:q_values} for the q-values can be constructed for an arbitrary $k$. In the case of $Q^0_{aw}$, the expression stays as is, there are just more options for $w$. For convenience, we will refer to $b_i$ from Lemma \ref{lemma:q_values} as $b^k_i$. 
	
	In the case of $Q^n_{aw}$ for $n \geq 1$ (the analogue of $Q^1_{aw}$ in Lemma \ref{lemma:q_values}), the only change is that $t_i$ is replaced by a $t^n_i = \frac{p^0_{im}}{\sum_{i} p^0_{im}}$. We note that the reason this has stayed so similar is because there are only two points of decision-making in this task, and the memory state can only be non-zero in the second one since it is always initialized to $0$.
	
	Now we will prove a similar statement as Lemma \ref{lemma:jaakola_subopt}: when the \citet{jaakkola1995reinforcement} algorithm is initialized to the uniform policy, the policy improvement step will always move the policy closer to $\pi_l$, since $\pi_l$ is the policy that is greedy on the current q-values. The proof is by induction.
	
	\underline{Base Case:} We begin with the uniform policy. We will proceed by identifying the relation between various quantities in the case of $k=1$ and for an arbitrary $k>1$. This will allow us to re-use the computation performed in the base case of Lemma \ref{lemma:jaakola_subopt}. 
	
	Notice that because there are always 4 environment actions applicable and $2^k$ memory write actions, there are $2^{k+2}$ possible actions in every state. Thus $p^m_{aw} = 1/2^{k+2}$. 
	Let $q^m_{aw} = 1/2^3$ be this value in the case that $k=1$.
	Clearly, $p^m_{aw} = \frac{q^m_{aw}}{2^{k-1}}$.

	Now, let us compare the value of $b^k_i$ relative to $b^1_i$ (which was $b_i$ in Lemma \ref{lemma:q_values} for the case of $k=1$):
	\begin{align*}
	    b^k_i = \frac{p^0_{i0}}{1 + \sum_j p^0_{j0}}
	    = \frac{\frac{1}{2^{k-1}} q^0_{i0}}{1 + \frac{1}{2^{k-1}} \sum_j q^0_{j0}}
	    = \frac{1}{2^{k-1}} \times \frac{1 + \sum_{0 \le j \le 3} q^0_{j0}}{1 + \frac{1}{2^{k-1}} \sum_{0 \le j \le 3} q^0_{j0}} b^1_i
	    = \frac{\beta^k}{2^{k-1}} b^1_i
	\end{align*}
	where $\beta^k = (1 + \sum_{0 \le j \le 3} p^0_{j0})/(1 + \frac{1}{2^{k-1}} \sum_{0 \le j \le 3} p^0_{j0})$.
	
	We will now use this expression, and the fact that the marginalized probabilities $p^m_a = q^m_a = 1/4$, to do the following derivation:
	{%
    \begingroup%
    \allowdisplaybreaks
	\begin{align}
        Q^0_{aw} &=\sum_{0 \leq i \leq 3} r_{ia} b^k_i + \left(1 - \sum_{0 \leq i \leq 3} b^k_i \right) \sum_{0 \leq i \leq 3} p^w_i r_{ai} \\ 
        &= \frac{\beta^k}{2^{k-1}} \sum_{0 \leq i \leq 3} r_{ia} b^1_i + \beta^k \left(1 - \sum_{0 \leq i \leq 3} b^1_i \right) \sum_{0 \leq i \leq 3} q^w_i r_{ai} \\
        &= \frac{\beta^k}{2^{k-1}} \left[\sum_{0 \leq i \leq 3} r_{ia} b^1_i + \left(1 - \sum_{0 \leq i \leq 3} b^1_i \right) \sum_{0 \leq i \leq 3} q^w_i r_{ai} \right] \label{exp:first_term}\\ 
        &~~~~~+ \left(\frac{2^{k-1} - 1}{2^{k-1}}\right) \beta^k \left(1 - \sum_{0 \leq i \leq 3} b^1_i \right) \sum_{0 \leq i \leq 3} q^w_i r_{ai} \label{exp:second_term}
	\end{align}%
    \endgroup%
    }%
	We note that the second line follows by a similar identity to one used in expression \ref{exp:one_minus_b}.
    
	The first term in this last expression (\textit{ie.} on line \ref{exp:first_term}) is just the same as the q-value in the case that $k=1$ is multiplied by a constant. It is therefore maximized by the values $a=1$ and $w=1$ as shown in the base case of Lemma \ref{lemma:local_optimum}. The second term (\textit{ie.} line \ref{exp:second_term}) is maximized for the average rewards seen after any first action (since the uniform policy is in use). By inspection, we can see this happens for $a=1$ and it is equal for the different memory writing actions. Thus, by our tie-breaking definition, the greedy action will be $a=1$ and $w=2^k-1$ and so the base case holds when the memory state is $0$.
	
	Let us now consider $Q^m_{aw}$ for $m\geq 1$. Here, the $t^n_i$ are exactly the same as the value of $t_i$ in the case that $k=1$, when the uniform policy is in use. So applying the same tie-breaking rules means that the greedy action is $a=3, w=2^k-1$, as predicted. 
	
	\underline{Inductive step:} We now consider how B$k$ for $k > 1$ is updated. For $m=0$, we can use the same argument as used in Lemma \ref{lemma:inductive_step}, simply replacing $\epsilon \mathbb{I}_{w=1}r_{a3}$ with $\epsilon \sum_{i=1}^{2^k-1} \mathbb{I}_{w=i} r_{a3}$. For $m \ge 1$, we have that $Q^m_{aw}$ is updated in the exact same way for $m=2^k-1$ as in the proof in Lemma \ref{lemma:inductive_step} for $m=1$, and further that $1 \le m < 2^k-1$ have $Q$-values which remain unchanged, as the conditional probability $t^m_a$ of having taken action $a$ on having written $m$ does not change since $p^0_{am}$ are all simply scaled by $1 - \varepsilon$ for these values of $m$. As such, the greedy actions will again be those given by $\pi_l$.
	
	Thus, every step of policy improvement during the \citet{jaakkola1995reinforcement} algorithm with perfect q-value estimates with move closer to $\pi_l$ when starting from the uniform policy. Since $\pi_l$ is suboptimal, the statement is true.
\end{proof}

We can now prove our main result: that the \citet{jaakkola1995reinforcement} algorithm is not guaranteed to converge to a policy that is as good as best history-based policy, even when using sufficiently expressive memory and perfect q-value estimations. This follows from Lemma \ref{lemma:memory_invariance} and the fact that B2 is sufficiently expressive for the recall task used in this section.

\begin{theorem} \label{theorem:converge_to_local}
	There exists a POMDP $\mathcal{P}$, such that even with a sufficiently expressive external memory module $\mathcal{M}$, the \citet{jaakkola1995reinforcement} algorithm with perfect q-value estimates will not converge to the optimal memoryless policy from an initially uniform policy $\pi_u$, when used on the memory-augmented environment $\tuple{\mathcal{P}, \mathcal{M}}$.
\end{theorem}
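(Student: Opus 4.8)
The plan is to present this statement as an immediate corollary of the machinery built up in Lemmas \ref{lemma:q_values}--\ref{lemma:memory_invariance}, since essentially all the work has already been done. First I would discharge the existential quantifier by choosing $\mathcal{P}$ to be the $4$-action recall task defined in this section and $\mathcal{M}$ to be a B2 memory. The witness for ``sufficiently expressive'' is already in hand: earlier in this section we exhibited an explicit memory-update function (start at $00$; write $01$ when action $0$ is taken from the start state and $11$ otherwise) that keeps the two reachable belief states in disjoint memory states, so B2 satisfies Definition \ref{def:expressive} for this POMDP. By Proposition \ref{thm:optimal}, this means the optimal memoryless policy for $\tuple{\mathcal{P},\mathcal{M}}$ attains the value of the optimal history-based policy for $\mathcal{P}$, namely the policy that plays $0$ then $2$ for a reward of $1$.

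Next I would invoke Lemma \ref{lemma:memory_invariance} at $k=2$, which already establishes that the \citet{jaakkola1995reinforcement} algorithm with perfect q-value estimates, started from the uniform policy, converges to the locally optimal policy $\pi_l$ (select action $1$, then action $3$) on the $4$-action recall task with B$k$ memory. Since $\pi_l$ yields reward $0.75 < 1$, it is strictly worse than the optimal history-based policy, and therefore strictly worse than the optimal memoryless policy for the memory-augmented environment identified in the previous paragraph. Combining these two facts gives exactly the claim: even with a sufficiently expressive memory module, the algorithm fails to reach the optimal memoryless policy from $\pi_u$.

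The theorem itself carries no independent technical difficulty; the only care required is to align the two halves on the \emph{same} module. Concretely, I must ensure that B2 is simultaneously (i) sufficiently expressive, so that Proposition \ref{thm:optimal} pins the target optimal memoryless policy to the optimal history-based policy, and (ii) an instance of Lemma \ref{lemma:memory_invariance}, so that the algorithm is known to converge to $\pi_l$. The genuinely hard part was already settled in the inductive step of Lemma \ref{lemma:memory_invariance}, where one shows that adding bits neither creates a profitable deviation in memory state $0$ nor perturbs the conditional action distributions $t^m_a$ in the intermediate states $1 \le m < 2^k-1$; with that lemma assumed, the theorem follows in a few lines.
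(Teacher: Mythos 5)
Your overall route is the same as the paper's: the paper also obtains Theorem \ref{theorem:converge_to_local} by instantiating $\mathcal{P}$ as the $4$-action recall task and combining Lemma \ref{lemma:memory_invariance} with the claim that B2 is sufficiently expressive for that task. The gap is in clause (i) of your own checklist, the expressiveness claim. Definition \ref{def:expressive} does not ask that the memory be able to \emph{represent} an optimal policy; it asks for a memory-update function $f$ under which any two histories that end in the same observation but have \emph{different belief states} receive disjoint sets of reachable memory states. In the $4$-action recall task there are five relevant belief states, not two: using the notation from the proof of Lemma \ref{lemma:q_values}, the histories $o$ and $o,a,o$ for $a\in\{0,1,2,3\}$ all end in the unique observation $o$, and their belief states are pairwise distinct as distributions over $\{s^*,s_0,s_1,s_2,s_3\}$, even though $s_1$, $s_2$, $s_3$ happen to call for the same optimal response. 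Your proposed $f$ sends the histories $o,1,o$, $o,2,o$, and $o,3,o$ to the common memory state $11$, violating the required disjointness; in fact no $f$ can work for B2, since the five histories would need five pairwise-disjoint nonempty subsets of memory states while B2 has only four memory states (and $\eta$ pins the initial history's set to $\{00\}$). So B2 is not sufficiently expressive in the defined sense, and your appeal to Proposition \ref{thm:optimal} is not licensed for B2. To be fair, the paper makes the same conflation: it ``shows'' B2 is sufficiently expressive by exhibiting an optimal policy, which is the weaker property.

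The repair is immediate, which is why the theorem survives: Lemma \ref{lemma:memory_invariance} holds for \emph{every} $k\ge 1$, so you are free to choose $k$ large enough that sufficient expressiveness genuinely holds. The proposition in Appendix \ref{app:memory_suff} certifies B$k$ for this task once $k\ge 5$ (here $|O|=1$, $|A|=4$, and $\lceil \log_2 u\rceil = 3$), and B5 is exactly the memory the paper tests in Figure \ref{fig:suboptimal-example}. Taking $\mathcal{M}$ to be B5 rather than B2, both of your clauses hold, and the rest of your argument goes through verbatim: Proposition \ref{thm:optimal} pins the optimal memoryless value at $1$ (play $0$ then $2$), while the algorithm converges to $\pi_l$, which earns only $0.75$.
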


Further, another result can be seen from this very same domain, by modifying $\mathcal{P}$. 

\begin{proposition}
    There exists a POMDP $\mathcal{P}$, such that even with a sufficiently expressive external memory module $\mathcal{M}$, the \citet{jaakkola1995reinforcement} algorithm with perfect q-value estimates will converge to a policy that is arbitrarily far from the optimal memoryless policy from an initially uniform policy $\pi_u$, when used on the memory-augmented environment $\tuple{\mathcal{P}, \mathcal{M}}$
\end{proposition}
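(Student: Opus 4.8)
The plan is to reuse the $4$-action recall task and its entire convergence analysis (Lemmas~\ref{lemma:q_values}--\ref{lemma:memory_invariance}), modifying only the reward table so that the optimal value grows without bound while the locally-optimal policy $\pi_l$ (take action $1$, then action $3$) retains its value $r_{13} = 3/4$. Concretely, I would introduce a parameter $C \ge 1$ and change exactly three entries of the table in Figure~\ref{fig:suboptimal-example}: set $r_{02} = C$ and compensate with $r_{00} = -4-C$ and $r_{22} = -4-C$, leaving every other entry as is (so $C=1$ recovers the original task). The optimal history-based policy then takes action $0$ followed by action $2$ for value $\max_{i,j} r_{ij} = C$, whereas $\pi_l$ still achieves only $3/4$.

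The key point is that the proofs of Lemmas~\ref{lemma:local_optimum}, \ref{lemma:inductive_step}, \ref{lemma:jaakola_subopt}, and \ref{lemma:memory_invariance} invoke the rewards only through a handful of \emph{order} conditions, each of which I would verify is preserved by the modification for every $C \ge 1$: under the uniform policy $a=1$ maximizes $Q^0_{aw}$ and $a=3$ maximizes $Q^1_{aw}$ (using the closed forms of Lemma~\ref{lemma:q_values}); $r_{13}$ is the largest entry of column $3$ and is at least as large as every entry of column $1$; $r_{13}$ is the largest entry of row $1$; and row $1$ has the largest row-average (needed for the B$k$ base case). Since these inequalities involve only rows and columns $1$ and $3$ (which are untouched) together with the uniform-policy q-value formulas, I can substitute the modified entries directly and check each inequality, after which all four lemmas carry over verbatim. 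Hence \citet{jaakkola1995reinforcement}'s algorithm with perfect q-value estimates, started from $\pi_u$, still converges to $\pi_l$ on $\tuple{\mathcal{P}_C, \mathrm{B}k}$ for every $k \ge 1$.

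To finish, I would take a sufficiently expressive memory---B$2$ suffices, exactly as in Theorem~\ref{theorem:converge_to_local}, since the task structure is unchanged---so that by Proposition~\ref{thm:optimal} the optimal memoryless policy attains the optimal history-based value $C$. The algorithm nonetheless converges to $\pi_l$ of value $3/4$, so the suboptimality gap is $C - 3/4$; given any target $G$, choosing $C = G + 3/4$ makes the limiting policy at least $G$ worse than optimal, which is the claim.

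The delicate step, and what I expect to be the main obstacle, is that inflating $r_{02}$ opens two separate channels through which the large reward could leak into the uniform-policy q-values and steer the algorithm toward the optimum: the row-$0$ average, which would make action $0$ attractive as a first action and raise $Q^0_{0w}$, and the belief-confusion coefficients $b_i$ together with the column-$2$ sum, which would make action $2$ attractive while in memory state $0$ and raise $Q^0_{2w}$ and $Q^1_{2w}$. The compensations $r_{00} = r_{22} = -4-C$ are chosen precisely to hold the row-$0$ sum and the column-$2$ sum constant and thus close both channels; the real work is confirming that these two large negative entries do not disturb the remaining orderings---in particular that column $3$ stays the unique maximizing column for $Q^1$ and that action $1$ stays the $Q^0$-maximizer---which follows because columns $1$ and $3$ and row $1$ are left untouched.
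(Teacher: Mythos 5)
Your proof is correct, but it takes a genuinely different route from the paper's, which is much shorter. The paper reuses the technique of \citet{singh1994learning} in a single step: scale \emph{every} entry of the $4$-action recall task's reward table by a constant $a \geq 1$. Since positive scaling multiplies the q-values of every policy by the same factor, every greedy policy encountered by the \citet{jaakkola1995reinforcement} algorithm is literally unchanged, so Lemmas~\ref{lemma:q_values}--\ref{lemma:memory_invariance} apply with no further verification, and the limit policy $\pi_l$ earns $\frac{3a}{4}$ against an optimum of $a$ --- a gap of $\frac{a}{4}$ that diverges with $a$. Your construction instead inflates only the optimal entry ($r_{02}=C$) and compensates with $r_{00}=r_{22}=-4-C$; this is sound, but it is precisely what creates the verification burden you identify. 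The checks do go through, for the reasons you give: under the uniform policy the q-values of Lemma~\ref{lemma:q_values} depend on the rewards only through row and column sums (namely $Q^0_{aw}$ equals $\frac{1}{12}$ times the column-$a$ sum plus $\frac{1}{6}$ times the row-$a$ sum, and $Q^1_{aw}$ equals $\frac{1}{4}$ times the column-$a$ sum), so your two invariants, together with the untouched row $1$ and columns $1$ and $3$, keep the greedy choices at $a=1$ in memory state $0$ and $a=3$ in nonzero memory states; and the inductive steps of Lemmas~\ref{lemma:inductive_step} and~\ref{lemma:memory_invariance} invoke the rewards only through row $1$ and columns $1$ and $3$, which you never modify. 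Hence convergence to $\pi_l$ persists, the optimum (reachable by a sufficiently expressive B$k$ via Proposition~\ref{thm:optimal}) is $C$, and the gap $C-\frac{3}{4}$ is unbounded, exactly as you argue. The trade-off: your version pins the limiting policy's value at $\frac{3}{4}$ so that only the optimal value moves, giving an additive separation at the cost of extra bookkeeping, while the paper's version requires zero re-checking of orderings at the cost of scaling every quantity, giving a multiplicative separation. Both are valid proofs of the proposition.
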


\begin{proof}
    This proof uses the same technique given by \citet{singh1994learning}.
    To do so, we merely have to scale the rewards used in the $4$-action recall task by any constant $a \geq 1$.
    Since the ordering of the action values remains the same, the same arguments will apply. Thus the algorithm will converge to a policy that receives $\frac{a}{4}$ less reward than the optimal history-based policy, which can be made arbitrarily large by choice of $a$.
\end{proof}
 
The above counterexample $\mathcal{P}$ may further give some indication as to why OA-type memory may be empirically outperforming B-type memory in our experiments. A large part of the reason the above counterexample fails to hold is because the states after the first action become confounded, as the agent does not learn to discern, even with arbitrarily large memory, between the different actions to trigger the storage of a given memory state. 

For empirical confirmation of this, see Figure \ref{fig:suboptimal-example} showing B5 converging to $\pi_l$ and OA1 converging to $\pi^*$ in POMDP $\mathcal{P}$. Intuitively, more pre-packaged knowledge in memory, and more structure given to the agent via the memory write/read mechanism, means the memory-augmented agent has fewer ways that it can confound itself in learning how to use memory, which is why choosing memory that comes with more built-in structure helps avoid local minima which are caused by this confusion.

\section{Experimental Evaluation}
\label{app:experiments}

This section provides further details about our experimental section. 

\subsection{Tabular Experiments in the Gravity Domain and Recall Task}
\label{app:gravity}

In the experiments with tabular q-learning, all the memories were tested using the same hyper-parameters: They explored using $\epsilon$-greedy with $\epsilon=0.01$, the discount factor was $0.95$, and the learning rate was $0.1$. We also used an optimistic initialization for the q-values. 

In the experiment with n-step actor-critic, all the memories were also tested using the same hyper-parameters. The discount factor was $0.95$. We used a soft-max exploratory policy initialized to a uniform distribution. The learning rate for the policy was $0.1$ and the learning rate for the value function was $0.001$.

\subsection{The Hallway Domains}
\label{app:hallway}

Figure~\ref{fig:dh_hallway} shows an overview of the Hallway environments \citep{toroicarteWKVCM2019learning}. These environments consist of three rooms connected by a hallway. The agent (shown as a purple triangle) can move in the four cardinal directions but its actions fail with a 5\% probability. These are partially observable environments since the agent can only see what it is in the room that it currently occupies, as shown in Figure~\ref{fig:dh_view}. What makes these tasks difficult is the hallway. The hallway forces the agent to observe long sequences of identical observations (multiple times) to solve a task. However, depending on previous observations, the optimal actions (and expected rewards) will be completely different when the agent is in the hallway.

The cookie domain is shown in Figure~\ref{fig:dh_cookie}. In this domain, there is a button in the yellow room that, when pressed, causes a cookie to randomly appear in the red or blue room. The agent receives a reward of $+1$ for eating the cookie and may then go and press the button again. Pressing the button before eating the cookie removes the existing cookie and delivers a new cookie. Each episode is $10,000$ steps long, during which the agent should attempt to eat as many cookies as possible.

Figure~\ref{fig:dh_key} shows another hallway environment, called the keys domain. Here, the agent receives a reward of $+1$ when it reaches the coffee. To do so, it must open the two doors (shown in brown). Each door requires a different key to open it, and the agent can only carry one key at a time. Initially, the two keys are randomly located in either the blue room, the red room, or split between them. When the coffee is reached, the agent is relocated in the middle of the hallway, the doors are locked, and the two keys are randomly placed in the blue and red rooms. Each episode is $10,000$ steps long, during which the agent should attempt to reach the coffee as many times as possible.

\begin{figure}
    \centering
    \begin{subfigure}[b]{.32\textwidth}
        \centering
        \includegraphics[width=0.9\textwidth]{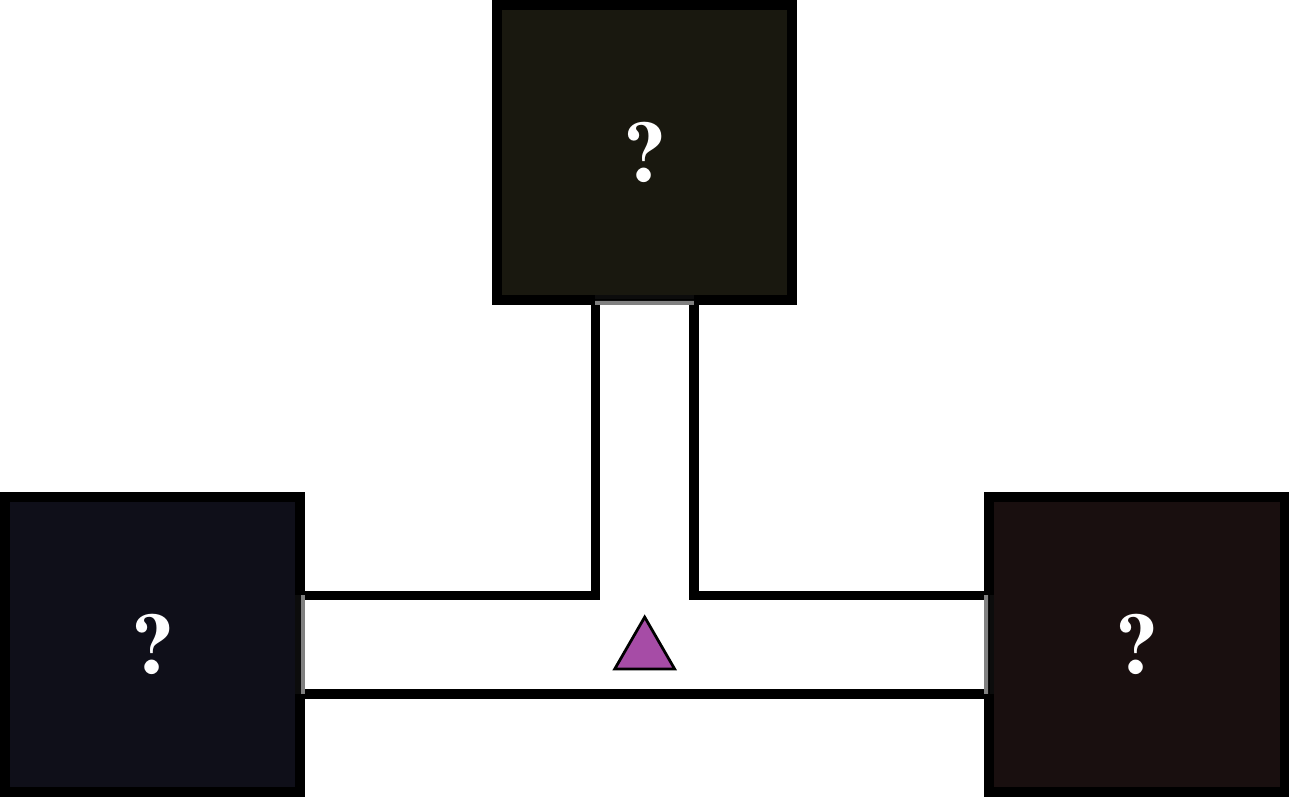}
        \subcaption{Agent's view.}
        \label{fig:dh_view}
    \end{subfigure}
    \begin{subfigure}[b]{.32\textwidth}
        \centering
        \includegraphics[width=0.9\textwidth]{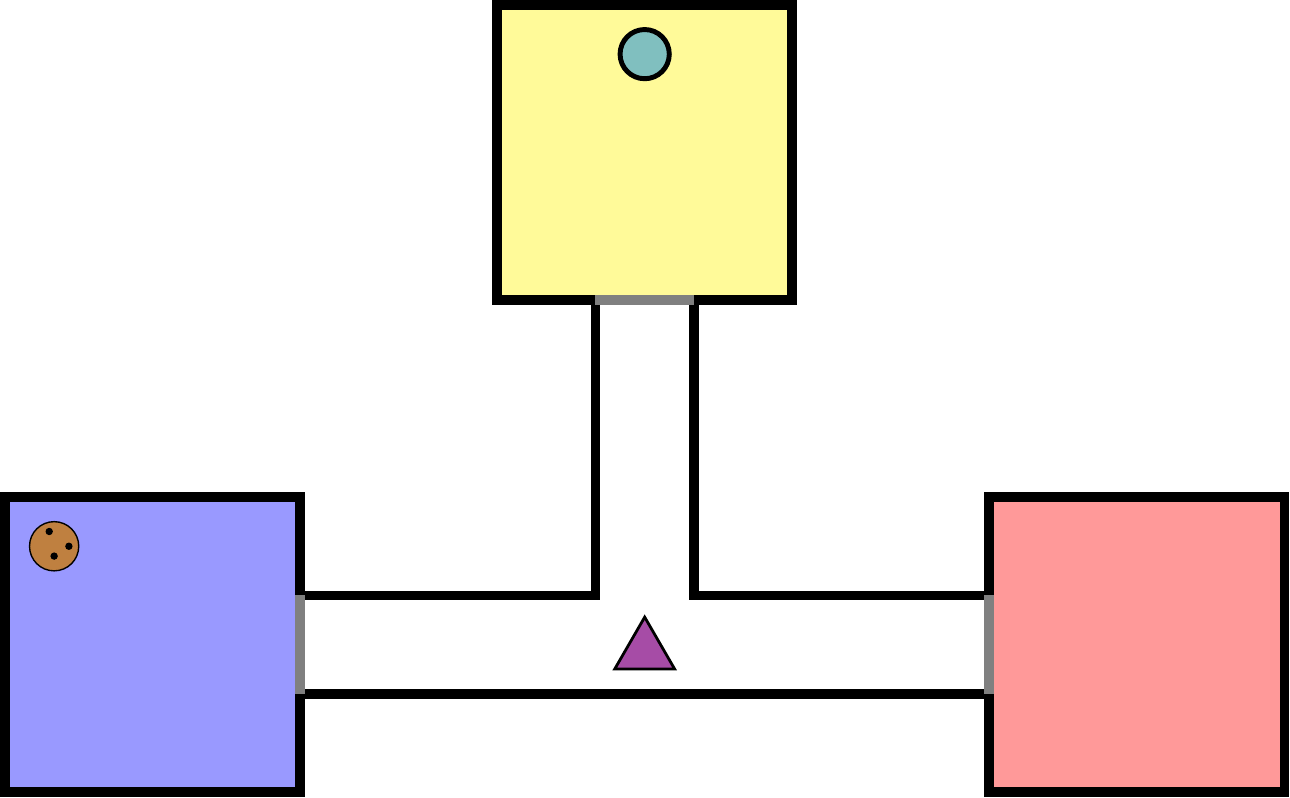}
        \subcaption{Cookie domain.}
        \label{fig:dh_cookie}
    \end{subfigure}
    \begin{subfigure}[b]{.32\textwidth}
        \centering
        \includegraphics[width=0.9\textwidth]{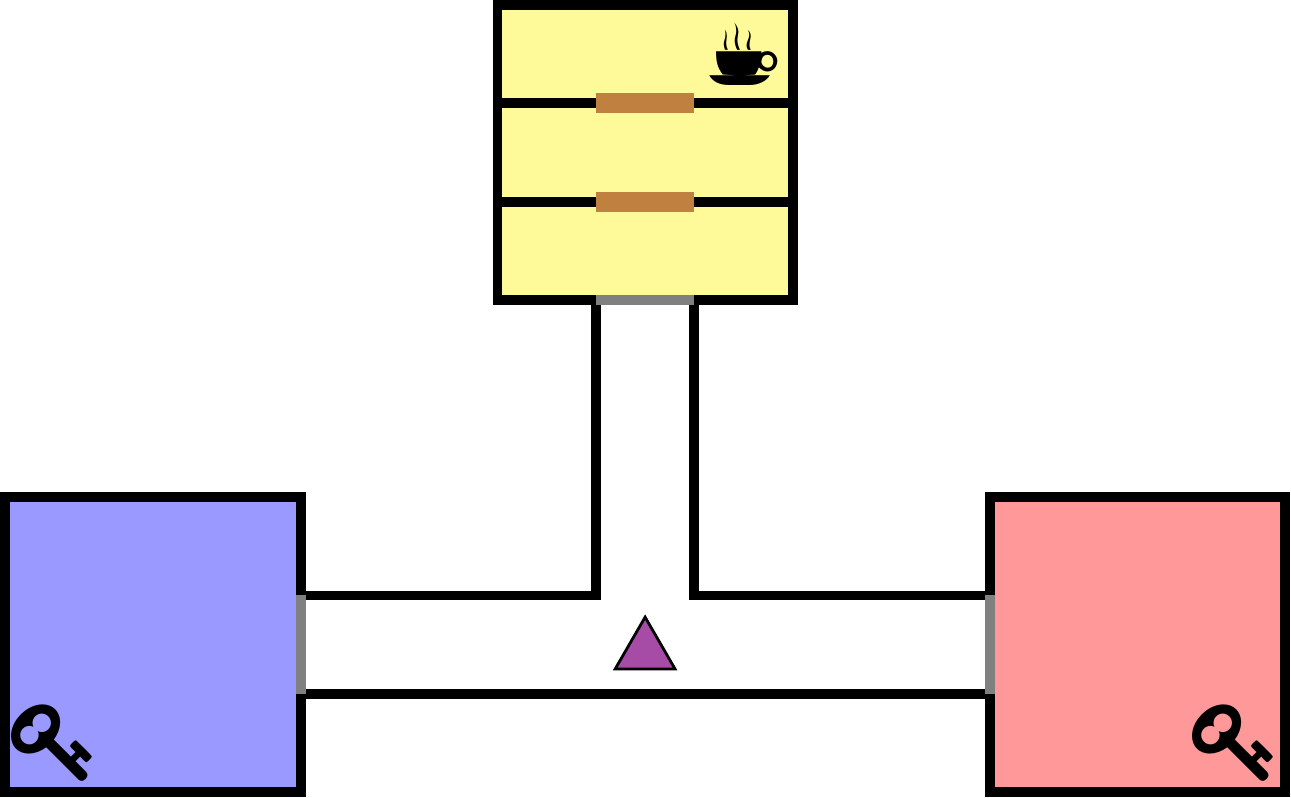}
        \subcaption{Keys domain.}
        \label{fig:dh_key}
    \end{subfigure}
    \caption{The hallway environments.}
    \label{fig:dh_hallway}
\end{figure}

While interacting with the environment, the agent is given a ``top-down" view of the world represented as a set of binary matrices. The first matrix has a 1 in the current location of the agent, the second has a 1 in only those locations that are currently observable, and the remaining matrices each correspond to an object in the environment. These remaining matrices only have a 1 at those locations that were both currently observable and contained that object (i.e., locations in other rooms are ``blacked out"). 

We ran experiments using the OpenAI baseline implementation of PPO \citep{baselines}. For O6, OA6, K6, and B6, the neural network used had 4 fully connected layers with 512 neurons per layer with \emph{tanh} activation functions. The LSTM baseline used the same network with an extra LSTM layer of 512 neurons. All approaches were trained using the Adam optimizer \citep{kingma2014adam}.

The approaches O6, OA6, K6, and B6 used a learning rate of 1e-5, a clipping range of 0.1, 16 training minibatches per update, and one training epoch per update. The value of n used for the n-step TD updates was 2048. The rest of the hyperparameters were set to their default values \citep{baselines}. The LSTM baseline used the same hyperparameters, but with a learning rate of 1e-3 and value of n of 128 for the n-step TD updates. We found these values worked marginally better than other configurations for the LSTM. 

\subsection{The MiniGrid Domains}
\label{app:minigrid}

The MiniGrid domains \citep{gym_minigrid} consist of an agent (red triangle) in a grid environment. The agent can only see what is near to it, as shown in Figure~\ref{fig:dm_minigrid}. The environment contains many objects that the agent can interact with. The agent has 7 actions: turn left, turn right, move forward, pick up an object, drop the object being carried, toggle (open doors, interact with objects), and done (to complete a task). We ran experiments on the two environments that were designed to test the agent's memory capabilities: RedBlueDoors and MemoryS7.

Figure~\ref{fig:dm_rb} shows the RedBlueDoors domain. In this environment, the agent is randomly located in a room with a red and a blue door. The agent is rewarded by opening the red door and then the blue door (in that order). Concretely, the agent receives a reward of 1 minus a small discount, which is proportional to the length of the episode, for solving this problem (and zero otherwise). The episode ends after opening the blue door or after reaching a time limit of 1280 steps.  

Figure~\ref{fig:dm_random} shows the MemoryS7 domain. In this environment, the agent starts in a small room where it sees an object and then it has to navigate to the matching object at the other side of the room. Going to the right object results in a reward of 1. Going to the wrong object results in zero reward. The episode ends when the agent reaches any of the candidate objects or after reaching a time limit of 245 steps.

\begin{figure}
    \centering

    \begin{subfigure}[b]{.635\textwidth}
        \centering
        \includegraphics[height=0.15\textheight]{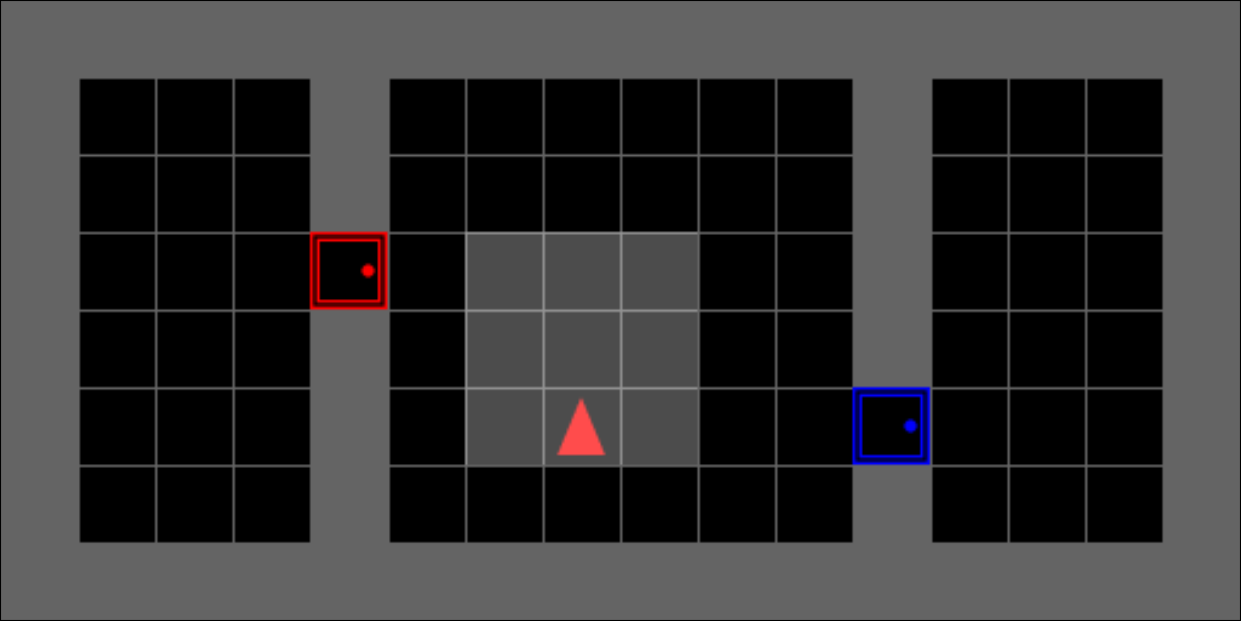}
        \subcaption{RedBlueDoors domain.}
        \label{fig:dm_rb}
    \end{subfigure}
    \begin{subfigure}[b]{.32\textwidth}
        \centering
        \includegraphics[height=0.15\textheight]{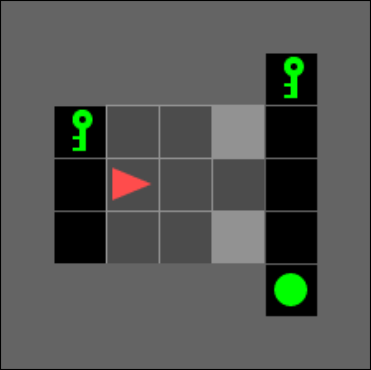}
        \subcaption{MemoryS7 domain.}
        \label{fig:dm_random}
    \end{subfigure}
    \caption{The minigrid environments.}
    \label{fig:dm_minigrid}
\end{figure}

The observations are encoded using one-hot representations. This encoding is a binary tensor that contains the objects in each tile that the agent can see. The objects are represented by three one-hot vectors that describe the object's type, color, and status (e.g., a closed red door).

We ran experiments using the OpenAI baseline implementation of PPO \citep{baselines}. For O3, OA3, K3, and B3, the neural network used had 5 fully connected layers with 128 neurons per layer with \emph{tanh} activation functions. The LSTM baseline used the same network with an extra LSTM layer of 128 neurons. All approaches were trained using the Adam optimizer \citep{kingma2014adam}.

The approaches O3, OA3, K3, and B3 used a learning rate of 1e-5, a value of 128 for n in the n-step TD updates, 8 training minibatches per update, and 4 training epochs per update. The rest of the hyperparameters were set to their default values \citep{baselines}. The LSTM baseline used the same hyperparameters, but a learning rate of 1e-3 in the MemoryS7 and 1e-5 in the RedBlueDoors. Those learning rates improved the LSTM performance considerably. Note that these are the hyperparameters recommended by \citet{minigrid-agent} to train PPO with LSTMs over MiniGrid environments. The only exception is the learning rate, which we fine-tuned.

\subsection{The Atari Domains}
\label{app:atari}

Figure~\ref{fig:da_atari} shows snapshots of two atari games: Pong and Seaquest. The objective of Pong is to control the green paddle to hit the white ball and beat the opponent (red paddle) by getting the ball passed the opponent's paddle. The objective of Seaquest is to control a submarine to shoot at enemies and rescue divers. We ran experiments using the OpenAI's interface \citep{openai2016gym} of the arcade learning environment \citep{bellemare2013arcade}. To incorporate some partial observability, we gave the agent only one frame as input. Hence, the agent cannot know the direction where objects are moving. Following \citet{machado2018revisiting}'s recommendations, we made these environments stochastic by including sticky actions with 0.25 probability and forcing the agent to randomly execute up to 30 no-op actions at the beginning of each episode. We also used three standard practices introduced by DeepMind \citep{mnih2015human} for training agents in atari domains. These are to reduce the input image to a grayscale image of 84x84 pixels, to clip the rewards, and to end the episode when the agent loses a life. In these domains, we used the same (convolutional) neural network and hyperparameters proposed by \citet{baselines}. 

\begin{figure}
    \centering
    \begin{subfigure}[b]{.4\textwidth}
        \centering
        \includegraphics[height=0.15\textheight]{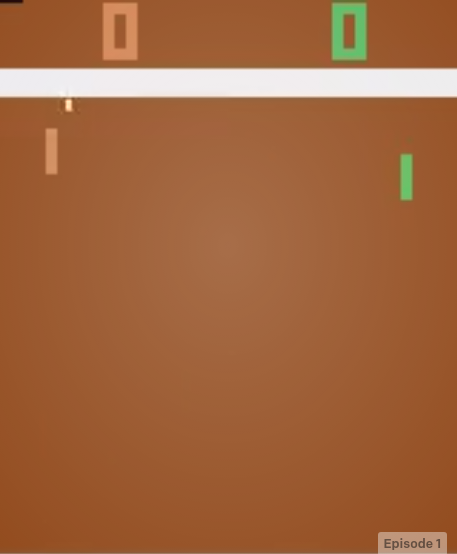}
        \subcaption{Pong domain.}
        \label{fig:da_pong}
    \end{subfigure}
    \begin{subfigure}[b]{.4\textwidth}
        \centering
        \includegraphics[height=0.15\textheight]{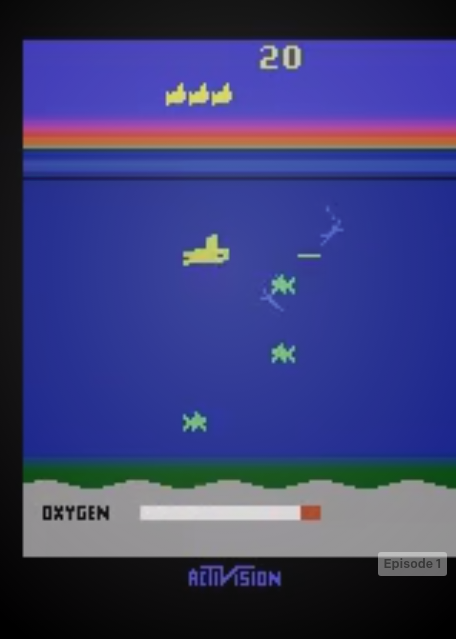}
        \subcaption{Seaquest domain.}
        \label{fig:da_seaquest}
    \end{subfigure}
    \caption{Atari environments.}
    \label{fig:da_atari}
\end{figure}

\subsection{Speedup comparison}
\label{app:speedups}

Learning memoryless policies for memory-augmented environments is usually faster than learning history-based policies. To show this, Table~\ref{tab:speedups} reports the relative speedups of PPO using different forms of external memories with respect to PPO using an LSTM. To compute these values, we ran each agent for $200,000$ steps. In the CPU experiments, we used 16 logical cores from a AMD Ryzen Threadripper 2990WX processor. In the GPU experiments, we used one NVIDIA Tesla P100  GPU and 8 CPUs. In all these cases, learning a memoryless policy for a memory-augmented environment was faster than learning a history-based policy using an LSTM (sometimes over 10 times faster). The only exception was OA3 in Atari games -- which was surprisingly slow. 

\begin{table}
    \centering
    \caption{Speedups using PPO. Each value shows the speedup relative to training an LSTM policy.}
    \label{tab:speedups}
    {\small
    \begin{tabular}{|llrr|llrr|}
        \hline
        Domain & Memory & CPU & GPU & Domain & Memory & CPU & GPU \\\hline
        Cookie  & None & 10.87 & 2.32 & Keys & None & 7.40 & 3.45 \\
                & K6   & 8.63 & 2.43  &      & K6   & 5.13 & 2.97 \\
                & B6   & 10.13 & 2.56 &      & B6   & 5.86 & 3.09 \\
                & O6   & 9.85 & 2.03  &      & O6   & 5.37 & 2.93 \\
                & OA6  & 8.24 & 1.92  &      & OA6  & 4.32 & 2.75 \\
                & LSTM & 1.00 & 1.00  &      & LSTM & 1.00 & 1.00 \\\hline \hline
        RBDoors & None & 2.99& 3.15&MemoryS7 & None & 2.91& 2.24\\
                & K3   & 2.26& 2.89&         & K3   & 2.73& 2.78\\
                & B3   & 2.90& 2.92&         & B3   & 3.01& 2.13\\
                & O3   & 2.18& 2.94&         & O3   & 2.80& 2.07\\
                & OA3  & 2.48& 2.34&         & OA3  & 2.40& 2.76\\
                & LSTM & 1.00 & 1.00  &      & LSTM & 1.00 & 1.00 \\\hline \hline
        Pong    & None & 1.51& 2.18   &Seaquest & None & 1.40& 2.72\\
                & K3  & 1.31& 1.21    &         & K3  & 1.26& 2.11\\
                & B3  & 1.33& 1.76    &         & B3  & 1.19& 1.63\\
                & O3  & 1.51& 1.71    &         & O3  & 1.06& 2.11\\
                & OA3 & 0.57& 0.67    &         & OA3 & 0.21& 0.40\\
                & LSTM & 1.00 & 1.00  &         & LSTM & 1.00 & 1.00 \\\hline
    \end{tabular}}
\end{table}

\subsection{Experiments using Temporal Difference Methods}
\label{app:more_results}

As discussed in the paper, the POMDP theory suggests that policy learning methods should be preferred when facing memory-augmented environments. However, we have found that pure temporal difference methods can still take advantage of O$k$ and OA$k$ memories. Here we show a few results using Sarsa($\lambda$) \citep{seijen2014true} and DDQN \citep{van2016deep}.

\subsubsection{Experiments using Sarsa($\lambda$)}

This section discusses results using Sarsa($\lambda$) in the gravity domain. Sarsa($\lambda$) is a strong candidate to learn policies over memory-augmented environments because the value of $\lambda \in [0,1]$ controls the degree in which this algorithm behaves as a pure TD method ($\lambda = 0$) or a pure Monte Carlo method ($\lambda = 1$). This is an important feature for memory-augmented environments because pure TD methods might be unable to accurately evaluate policies (see Section~\ref{sec:pol_eval}). In fact, \citet{peshkin1999learning} recommended to use Sarsa($\lambda$) with a $\lambda$ close to 1 to learn policies over B$k$ memories. 

In these experiments, we used $\epsilon$-greedy exploration (with $\epsilon = 0.01$), a discount factor of 0.95, and a learning rate of 0.1. We used an optimistic initialization for the q-values. Figure~\ref{fig:gravity_sarsa} shows the results for $\lambda$ equal $0$, $0.5$, and $1$. Note that Sarsa($\lambda$) learns to control O1 and OA1 memories regardless of the value of $\lambda$. However, the most stable performance was obtained when using Sarsa($0.5$) with O1. Note that Sarsa($1.0$) was able to properly control the B1 memory, but its performance was unstable.

\begin{figure}
    \centering
    \begin{minipage}[t]{0.31\textwidth}
        \centering
        Sarsa(0.0)
        
        \vspace{1mm}
        \includegraphics[]{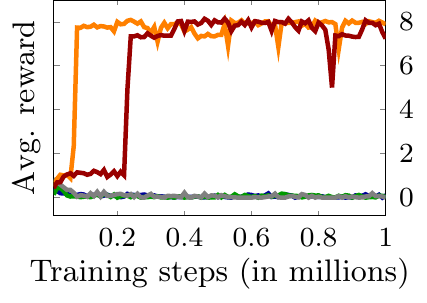}
    \end{minipage}%
    \begin{minipage}[t]{0.31\textwidth}
        \centering
        Sarsa(0.5)
        
        \vspace{1mm}
        \includegraphics[]{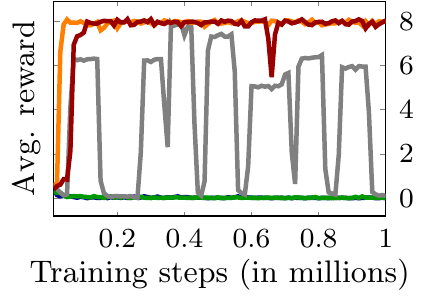}
    \end{minipage}%
    \begin{minipage}[t]{0.31\textwidth}
        \centering
        Sarsa(1.0)
        
        \vspace{1mm}
        \includegraphics[]{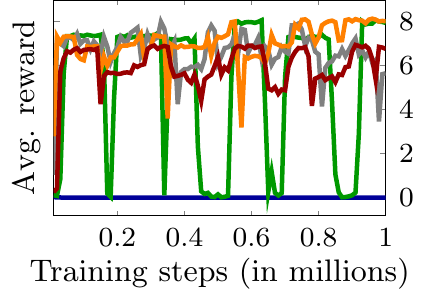}
    \end{minipage}%

    \vspace{0mm}
    {\renewcommand{\tabcolsep}{3pt}
\small\begin{tikzpicture}
\node {%
	\begin{tabular}{lllll}
    \entry{blue!60!black}{None} &
	\entry{gray}{K1} &
	\entry{green!60!black}{B1}&
	\entry{orange}{O1} &
	\entry{red!60!black}{OA1}
	\end{tabular}};
\end{tikzpicture}}%
    \vspace{-2mm}
    
    \caption{Tabular experiments in the gravity domain. We reported the avg. reward per 100 steps.}
    \label{fig:gravity_sarsa}
\end{figure}

\subsubsection{Experiments using DDQN}

We also ran experiments using the OpenAI baseline implementation \citep{baselines} of DDQN \citep{van2016deep}. Figure~\ref{fig:results_dqn} shows the results. Each line is the average reward per episode over 3 runs and the shadowed area represents half a standard deviation. In general, O$k$ memories outperform the other forms of memories, except for Pong where K3 performs the best (as expected). Note that OA3 performs well in Pong, while it did not when trained using PPO.

In the Cookie domain, the neural network used had 6 fully connected layers with 128 neurons per layer with \emph{tanh} activation functions. On every step, we trained the network using 64 sampled experiences from a replay buffer of size $100,000$ using the Adam optimizer \citep{kingma2014adam}, a discount factor of 0.99, and a learning rate of 1e-5. The exploratory policy was $\epsilon$-greedy with $\epsilon=0.1$. The target network was updated every $100$ steps.

In the RedBlueDoors domain, the neural network used had 4 fully connected layers with 128 neurons per layer with \emph{tanh} activation functions. On every step, we trained the network using 32 sampled experiences from a replay buffer of size $100,000$ using the Adam optimizer \citep{kingma2014adam}, a discount factor of 0.9, and a learning rate of 1e-5. The exploratory policy was $\epsilon$-greedy with $\epsilon=0.1$. The target network was updated every $100$ steps.

In Pong, we used the same (convolutional) neural network and hyperparameters proposed by \citet{baselines} for DDQN. 

\begin{figure}
    \centering
    \begin{minipage}[t]{0.33\textwidth}
        \centering
        Hallway-Cookie

        \includegraphics[]{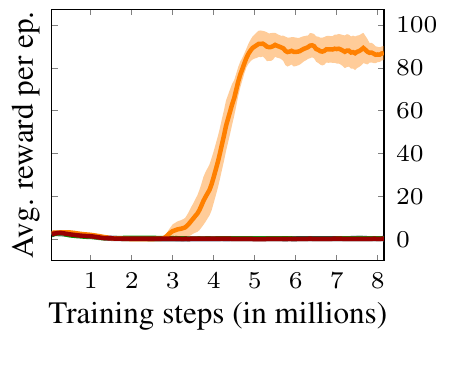}

    \end{minipage}%
    \begin{minipage}[t]{0.325\textwidth}
        \centering
        MiniGrid-RedBlueDoors

        \includegraphics[]{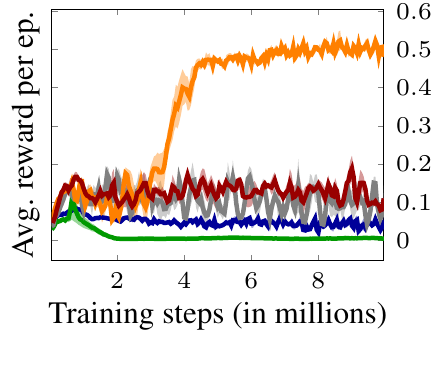}

    \end{minipage}%
    \begin{minipage}[t]{0.345\textwidth}
        \centering
        Atari-Pong

        \includegraphics[]{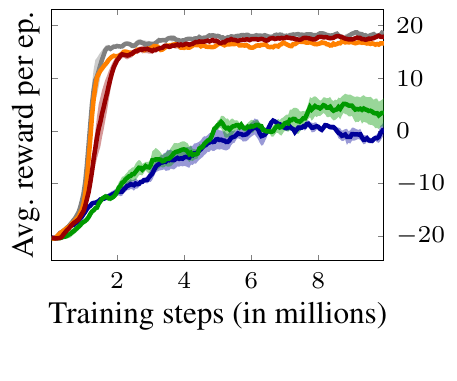}
    \end{minipage}%
    \vspace{-3.5mm}

    {\renewcommand{\tabcolsep}{3pt}
\small\begin{tikzpicture}
\node {%
	\begin{tabular}{lllll}
    \entry{blue!60!black}{None} &
	\entry{gray}{K3} &
	\entry{green!60!black}{B3}&
	\entry{orange}{O3} &
	\entry{red!60!black}{OA3}
	\end{tabular}};
\end{tikzpicture}}%
    \vspace{-2mm}
    
    \caption{Results over partially observable benchmarks using DDQN and different memories.}
    \label{fig:results_dqn}
\end{figure} 

\end{document}